\title{Regret Lower Bounds \\ in Multi-agent Multi-armed Bandit}
\author {
    % Authors
    Mengfan Xu\textsuperscript{\rm 1} 
    Diego Klabjan\textsuperscript{\rm 1}
}
\title{My Publication Title --- Single Author}
\author {
    Author Name
}
\title{My Publication Title --- Multiple Authors}
\author {
    % Authors
    Mengfan Xu\textsuperscript{\rm 1},
    Diego Klabjan\textsuperscript{\rm 1}
}
\newtheorem{theorem}{Theorem}
\newtheorem*{remark}{Remark}
\newtheorem*{lemma*}{Lemma}
\newtheorem*{Proposition*}{Proposition}
\newcommand\redout{\bgroup\markoverwith
{\textcolor{red}{\rule[0.5ex]{2pt}{0.8pt}}}\ULon}
\begin{document}

\maketitle

\let\thefootnote\relax\footnote{Preprint. Under review.}

\begin{abstract}
Multi-armed Bandit motivates methods with provable upper bounds on regret and also the counterpart lower bounds have been extensively studied in this context. Recently, Multi-agent Multi-armed Bandit has gained significant traction in various domains, where individual clients face bandit problems in a distributed manner and the objective is the overall system performance, typically measured by regret. While efficient algorithms with regret upper bounds have emerged, limited attention has been given to the corresponding regret lower bounds, except for a recent lower bound for adversarial settings, which, however, has a gap with let known upper bounds. To this end, we herein provide the first comprehensive study on regret lower bounds across different settings and establish their tightness. Specifically, when the graphs exhibit good connectivity properties and the rewards are stochastically distributed, we demonstrate a lower bound of order $O(\log T)$ for instance-dependent bounds and $\sqrt{T}$ for mean-gap independent bounds which are tight. Assuming adversarial rewards, we establish a lower bound $O(T^{\frac{2}{3}})$ for connected graphs, thereby bridging the gap between the lower and upper bound in the prior work. We also show a linear regret lower bound when the graph is disconnected. While previous works have explored these settings with upper bounds, we provide a thorough study on tight lower bounds. 
\end{abstract}

\section{Introduction}

Multi-armed Bandit (MAB) is a well-known online sequential decision making paradigm where a player selects arms, receives corresponding rewards at each time step, and aims to maximize their cumulative reward over a process of length $T$. Regret minimization is at the heart of MAB, where regret measures the difference between the cumulative reward obtained by always selecting the best arm and the cumulative reward achieved by a player's policy. To this end, balancing exploration (gaining information) and exploitation (maximizing current reward) is key to the player's success. Several classical algorithms have been developed for different MAB settings with proven upper bounds on the regret. Furthermore, to establish optimality of these algorithms, it is essential to prove lower bounds of the same order (in terms of the time horizon $T$) for all algorithms in specific problem instances. If such lower bounds exist, we refer to them as tight. These worst-case scenario analyses determine the fundamental complexity of bandit problems, validate whether the algorithms are optimal or not, and motivate  the development of optimal algorithms. Specifically, in the instance-dependent case, KL-divergence plays a crucial role in characterizing the hardness of distinguishing between optimal and sub-optimal arms. The seminal work by \citep{lai1985asymptotically} establishes an asymptotic regret lower bound of order $O(\log T)$ for consistent algorithms using an elegant regret decomposition approach that incorporates KL-divergence. Subsequent work relaxes the assumptions of consistency and asymptotics \citep{lattimore2020bandit} assuming 2 arms. For the mean-gap independent case, \citep{lattimore2020bandit} demonstrate a minimax regret lower bound of order $\sqrt{T}$. Furthermore, \citep{shamir2014fundamental} establishes a general regret lower bound of order $\sqrt{T}$ for MAB variants where multiple arms can be pulled at each time step. The key idea behind these results is to construct problem instances where the optimal arm is very close to the sub-optimal arms but not too close, making it challenging for the player to distinguish between them and resulting in a risk of getting less rewards and significant regret. The gap is precisely chosen and is the main technique.

Recently, the field of multi-agent Multi-armed Bandit (multi-agent MAB) has gained significant attention, driven by the application of cooperative learning processes in federated learning to various real-world scenarios, including healthcare and autonomous driving, as well as the increasing demand for large-scale distributed decision learning processes in sensor networks and robotic systems. In multi-agent MAB, multiple agents, also referred to as clients or players, face multiple MABs. The objective of the clients is to optimize the overall system performance, which is quantified using regret. Regret measures the difference between the cumulative reward obtained by pulling the optimal arm, where optimality is defined based on the average rewards across all clients, and the cumulative reward obtained by all the clients. Similar to the categorization in the traditional MAB framework, problem settings in multi-agent MAB are classified as either stochastic or adversarial, depending on the nature of reward distributions. In stochastic multi-agent MAB, the rewards for each client are independently and identically distributed over time, while in adversarial multi-agent MAB, the rewards are chosen by an adversary.

The multi-agent MAB framework presents additional challenges compared to the traditional MAB. Similar to MAB, it deals with the exploration-exploitation trade-off as a major challenge. However, in the multi-agent setting, each client faces this challenge while potentially lacking complete information about other clients. This limitation arises from the fact that optimality is defined based on average rewards across clients, requiring each client to obtain information from other clients, which, however, is constrained by the distribution of clients within the system. To tackle this issue, previous work has extensively studied settings that incorporate a central server, also referred to as a controller, as discussed in \citep{bistritz2018distributed,zhu2021federated,huang2021federated,mitra2021exploiting,reda2022near,yan2022federated}. In this setup, the central server integrates and distributes information among the clients at each time step, which has led to a regret upper bound of order $O(\log T)$ in stochastic multi-agent MAB matching the regret bounds in stochastic MAB. However, despite being mentioned in \citep{martinez2019decentralized} regarding the instance-dependent lower bound of order $\log T$, a formal lower bound statement has yet to be thoroughly examined in this centralized structure. This research gap partly motivates the present study, where we aim to address this knowledge gap and provide a comprehensive analysis of the regret lower bound within the centralized multi-agent MAB framework.

The assumption of centralization may not be realistic in real-world scenarios, where clients are often limited to pairwise transmissions constrained by underlying graph structures. In response to this, a fully decentralized framework characterized by means of graph structures has been proposed in several studies \citep{landgren2016distributed,landgren2016distributed_2,landgren2021distributed,zhu2020distributed,martinez2019decentralized,agarwal2022multi, wang2021multitask, jiang2023multi, zhu2021decentralized,zhu2021federated}. This decentralized approach removes the centralization assumption, making it more general while introducing non-trivial challenges. To this end, certain assumptions on the graphs are incorporated in these studies. Examples include complete graphs \citep{wang2021multitask}, regular graphs \citep{jiang2023multi}, and connected graphs under the doubly stochasticity assumption \citep{zhu2021decentralized,zhu2020distributed}. In all cases, the regret upper bounds that are of order $O(\log T)$, are consistent with those in the MAB setting. Furthermore, recent research has focused on time-varying graphs, such as B-connected graphs under the doubly stochasticity assumption \citep{zhu2023distributed} , as well as random graphs, including the Erdős-Rényi model and random connected graphs \citep{zhu2023distributed}. Likewise, in these cases, the regret upper bounds maintain the order $O(\log T)$. However, it is important to note that the corresponding regret lower bounds have not yet been addressed in the existing literature, which is one of the main focuses of this study.

    In a separate line of research, \citep{jia2021multi} have introduced a regret upper bound in MAB of order $\sqrt{T}$, which is independent of the sub-optimality gap $\Delta_i$ representing the difference between the mean value of the optimal arm and the mean value of the sub-optimal arms. Their setting is standard MAB. Unlike the above regret bound of order $O(\log T) = O\left(\frac{\log T}{\Delta_i}\right)$ that tends to grow rapidly when $\Delta_i$ approaches zero, this mean-gap independent regret bound remains stable even when $\Delta_i$ is very small and thereby holding universally across different problem settings. Building upon this, \citep{xu2023decentralized} analyze the decentralized multi-agent MAB framework with random graphs, and establish a regret upper bound of order $O(\sqrt{T}\log T)$, which aligns with \citep{jia2021multi} up to a logarithmic factor. However, despite these advancements in the regret upper bounds, the corresponding regret lower bounds in the mean-gap independent sense have not yet been explored. Addressing this research gap is one of the primary objectives of this paper.

In addition to the classical stochastic settings, \citep{cesa2016delay} investigate an adversarial multi-agent MAB problem and provide a regret upper bound of order $\sqrt{T}$, demonstrating its consistency with the adversarial MAB problem under the EXP3 algorithm. More recently, \citep{yi2023doubly} have focused on the heterogeneous variant, where different adversaries are different across clients. The presence of heterogeneous adversaries poses a significant challenge, resulting in a regret upper bound of order $O(T^{\frac{2}{3}})$, which is larger than the regret bound for the standard MAB problem of order $\sqrt{T}$. Furthermore, in the adversarial setting, they establish a regret lower bound of order $\sqrt{T},$ which, while informative, is smaller than their proposed regret upper bound. They achieve this by leveraging the results from the MAB setting presented in \citep{shamir2014fundamental} and constructing problem instances with mini batches of adversarial rewards. Nevertheless, it remains unexplored whether this lower bound is optimal and whether it is possible to develop even larger lower bounds or smaller upper bounds in order to claim optimality. This paper improves the lower bound in this setting and highlights its fundamental challenge by incorporating mini batches and constructing a novel graph instance. %which increases the complexity of the problem and consequently leads to a larger regret lower bound of order $T^{\frac{2}{3}}$, matching the regret upper bound of order $O(T^{\frac{2}{3}})$. This result highlights the fundamental challenge of the multi-agent MAB problem and points towards the direction of finding optimal solutions. }.

We introduce a novel contribution to the decentralized multi-agent MAB problem by investigating the regret lower bounds in various settings, accounting for different graph structures and reward assumptions. In the context of stochastic rewards and instance-dependent regret bounds, we provide the first formal analysis of the regret lower bound for the centralized setting, demonstrating its tightness. We leverage the aforementioned classical idea in MAB and incorporate it into this multi-agent MAB setting. Additionally, we conduct a comprehensive study on the regret lower bounds in decentralized settings under various graph assumptions by proposing instances that capture the problem complexities of multi-agent systems on a brand new temporal graph. We show that the regret bounds are of order $\Omega(\log T)$, aligning with the existing work's regret upper bounds and establishing their optimality and tightness.

Apart from the instance-dependent regret lower bounds of order $\Omega(\log T)$, we further extend our analysis to mean-gap independent regret lower bounds, presenting a novel contribution as well. Specifically, we establish mean-gap independent regret bounds of order $\Omega(\sqrt{T})$, which not only validate near optimality of the algorithm proposed in~\citep{xu2023decentralized} up to a $\log T$ factor but also coincide with the existing literature on MAB. This study enhances the understanding of the decentralized problem settings and provides valuable insights for future research in terms of robust methodologies in this context.

Furthermore, our research extends to adversarial settings, where we establish regret lower bounds and demonstrate their tightness across various graph assumptions, including both centralized and decentralized scenarios. Firstly, we show that the regret lower bound is of order $\Omega(\sqrt{T})$ for complete graphs, which aligns with the results for traditional MAB problems, highlighting their inherent similarities. Particularly noteworthy is our finding that the regret lower bound for decentralized multi-agent MAB with connected graphs is of order $\Omega(T^{\frac{2}{3}})$. Notably, we construct a novel graph instance in the connected graph family and adopt a more complicated random shuffling mini batches, which increases the complexity of the problem. This result effectively bridges the gap between the regret upper and lower bounds presented in~\citep{yi2023doubly} and establishes that achieving a regret upper bound of $O(\sqrt{T})$ is infeasible in this adversarial setting. Our work uncovers the inherent limitations and challenges of addressing adversarial multi-agent MAB problems even with good connectivity properties compared to traditional MAB problems. Moreover, we explore the regret lower bounds in disconnected graphs with a clique connected component and demonstrate regret lower bounds of order $\Omega(T)$. These findings provide valuable insights into the performance limitations of multi-agent MAB algorithms in graph structures with limited connectivity.

Our main contributions are as follows. We are the first
\begin{itemize}
    \item to formally establish the tight instance-dependent regret lower bounds of order $\log T$ in stochastic multi-agent MAB in both centralized and decentralized settings,
    \item to study the mean-gap independent regret lower bounds of order $\sqrt{T}$ in multi-agent MAB,
    \item to prove that for adversarial settings, the regret lower bound is of order $T^{\frac{2}{3}}$ and $T$ for connected and disconnected graphs, the first of which bridges the existing gap; a coherent analysis also extends to complete graphs, where the result is of order $\sqrt{T}$.      
\end{itemize}

The structure of the paper is as follows. First, we formally introduce the problem settings along with the notations that are utilized throughout the paper. In the subsequent section, we provide the statements on the regret lower bounds in a wide variety of settings. Finally, we summarize the paper and point out future possibilities based on the findings.

\section{Problem Formulation}

Throughout the paper, we study a decentralized system with $M \geq 3$ clients, and $T$ represents the time horizon. More specifically, the clients are labeled as nodes $1,2,\ldots,M$ on a network, where the underlying graph at each time step $1 \leq t \leq T$ is represented by an undirected graph $G_t$. It is worth emphasizing that the centralization structure is equivalent to communications on a complete graph since every pair of clients communicates through the central server.

Formally, $G_t = (V,E_t)$ is described by a unique vertex set $V = {1,2, \ldots, M}$ and an edge set $E_t$ that contains pairwise nodes and conveys the neighborhood information of $G_t$. We use $\mathcal{N}_m(t)$ to denote the neighbor set of client $m$, which represents all the neighbors of client $m$ in $G_t$. It is worth noting that the graph $G_t$ can be equivalently described by its adjacency matrix, denoted as $(X_{i,j}^t)_{1 \leq i,j \leq M}$, where the element $X_{i,j}^t$ is equal to 1 if there is an edge between clients $i$ and $j$, and 0 otherwise. For simplicity, we specify $X_{i,i} = 1$ for any client $1 \leq i \leq M$. We use $\mathcal{G}_M$ to denote the set of all connected graphs with $M$ nodes. If $G = G_t$, we call it stationary and otherwise temporal. In the Erdős-Rényi model we use superscript $c$ where $c$ is the edge probability, e.g. $\mathcal{N}_m^c(t)$ is defined based on probability $c$. In the random connected graph model we denote by $c$ the probability of an edge being in such a graph. 

Subsequently, we introduce the bandit problems associated with the clients. Consistent with the existing literature, an environment generates graphs $G_t$ and rewards $r_i^m(t)$. For each client $1 \leq m \leq M$, there are $K \geq 2$ arms to be pulled. At each time step $t$, the reward of arm $1 \leq i \leq K$ is denoted as $r_i^m(t)$, which is independently and identically distributed across time with a mean value of $\mu_i^m$. The clients draw rewards independently of one another. The interaction between the client and the environment works as follows; Client $m$ pulls an arm $a_m^t$ and obtains the corresponding reward $r_{a_m^t}^m(t)$ from the environment. Additionally, clients can communicate with their neighbors in $G_t$ as provided by the environment. This means that two clients can exchange information if and only if they are connected by an edge.

%In other words, the environment takes the tasks of generating graphs and rewards, knowing the pulled arms of the clients, and only reveals the rewards of pulled arms to clients. We call the environment fully stochastic if both the graphs and the rewards are i.i.d. across the time. Given the fully stochastic environment, 
Following the common definition of the global reward, we define the global reward of arm $i$ as $r_i(t) = \frac{1}{M}\sum_{m=1}^M r_i^m(t)$, and the corresponding expected global reward as $\mu_i = \frac{1}{M}\sum_{m=1}^M \mu_i^m$. An arm is called globally optimal if $i^* = \arg\max_{i} \mu_i$, and globally sub-optimal otherwise. The parameter $\Delta_i = \mu_{i^*} - \mu_i$ represents the sub-optimality gap of arm $i$. 

We note that $\max_{i} T \cdot \mu_i = \max_i E[\sum_{t=1}^T r_i(t)] \leq E[\max_i \sum_{t=1}^T r_i(t)]$, by the Jensen's inequality. If we establish a lower bound on the regret defined with respect to $\max_{i} T \cdot \mu_i$ (called also pseudo regret), we establish that the expected regret with respect to  $E[\max_i \sum_{t=1}^T r_i(t)]$ exhibits the same lower bound. As a result, we focus on demonstrating lower bounds on the pseudo regret throughout the paper, which is called regret for convenience.

This allows us to precisely quantify the regret associated with the action sequence (policy) $\pi = \{a_m^t\}_{1 \leq m \leq M}^{1 \leq t \leq T}$. In an ideal scenario where complete knowledge of $\{\mu_i\}_{i}$ is available, clients would prefer to pull the arm $i^*$. However, due to partially observed rewards from the bandits (dimension $i$) and limited access to information from other clients (dimension $m$), the regret of a policy $\pi$ in the bandit setting is defined as $R_T^{\pi} = T\mu_{i^*} - \frac{1}{M}\sum_{t=1}^T\sum_{m=1}^M\mu_{a_m^t}^m$. This regret metric quantifies the difference between the cumulative expected reward obtained by following the globally optimal arm and the actual reward accumulated by executing the action sequence. We consider two types of policies. Denote $\sigma_F^{t,m} = \sigma(\{\{I_j^s\}_{j \in \mathcal{N}_m(s)}\}_{s \leq t})$ where $I_j^s$ represents the information of all arms contained at client $j$ at time step $s$ and, denote $\sigma_B^{t,m} = \sigma(\{\{I_j^s(a_s^j)\}_{j \in \mathcal{N}_m(s)}\}_{s \leq t})$ where $I_j^s(a_s^j)$ represents the information of arm $a_s^j$ contained at client $j$ at time step $s$. In other words, $\sigma_F^t$ captures the history of all arms up to time $t$, whereas $\sigma_B^{t,m}$ only contains the information of client $m$'s time dependent actions up to time $t$. Henceforth, we have 
$\sigma_B^{t,m} \subset \sigma_F^{t,m}.$ 
With these notations at hand, we further define policy set $\Pi_F$ and $\Pi_B$ as  $\Pi_F = \{f_t\} \text{ where the domain of } f_t \text{ is on } \sigma_F^{t} = \{\sigma_F^{t,m}\}_m, \Pi_B = \{g_t\}  \text{ where the domain of } g_t \text{ is on } \sigma_B^{t} = \{\sigma_B^{t,m}\}_m.$ To this end we define $R_T^{B} = \min_{\pi \in \Pi_B}R_T^{\pi}$.  Likewise, assuming the observations of all arms are visible to the clients, which is referred to as the full-information setting, we denote the regret as $R_T^{F} = \min_{\pi \in \Pi_F}R_T^{\pi}$. 

The primary objective of this paper is to develop theoretical lower bounds on the regret in worst-case scenarios under different assumptions on the underlying graphs, where clients operating in decentralized settings have certain regrets regardless of the policies deployed.

\section{Lower Bound Analyses}

Before analyzing the regret lower bounds in bandit settings, we consider its relationship with the regret in the full information setting. The full information setting provides a less black-box approach for characterizing the regret of algorithms.
\begin{theorem}\label{le:t-i-0-0}
For decentralized multi-agent problems on any graph $G_t$, for all problem instances we have $R_T^{F} \leq R_T^{B}.$
\end{theorem}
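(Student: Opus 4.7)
The plan is to reduce the inequality to a simple set-containment argument on the feasible policy classes $\Pi_F$ and $\Pi_B$, exploiting the relationship $\sigma_B^{t,m}\subset \sigma_F^{t,m}$ that is already recorded in the problem formulation. The intuition is that full information strictly dominates bandit feedback: whatever a client can compute from its bandit history, it can also compute when additionally handed the rewards of the un-pulled arms, so any bandit-feasible policy can be emulated in the full-information setting and achieve exactly the same realized actions and hence the same regret.

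Concretely, I would proceed in three short steps. First, I would recall the definitions $\Pi_B=\{g_t\}$ with $g_t$ a function on $\sigma_B^{t}$ and $\Pi_F=\{f_t\}$ with $f_t$ a function on $\sigma_F^{t}$, and observe that since $\sigma_B^{t,m}\subset \sigma_F^{t,m}$ for every client $m$ and every $t$, any function measurable with respect to $\sigma_B^{t}$ is automatically measurable with respect to $\sigma_F^{t}$. Second, I would use this to conclude the policy-class embedding $\Pi_B\subset \Pi_F$, in the sense that every $\pi\in \Pi_B$ can be identified with a policy in $\Pi_F$ that ignores the extra coordinates of $\sigma_F^{t}$ and therefore produces exactly the same action sequence $\{a_m^t\}_{m,t}$ on every sample path. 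Third, since the regret $R_T^{\pi} = T\mu_{i^*} - \tfrac{1}{M}\sum_{t,m}\mu_{a_m^t}^m$ depends on $\pi$ only through its realized actions, the identified policies have the same value of $R_T^{\pi}$, so
\[
R_T^{F} \;=\; \min_{\pi\in\Pi_F} R_T^{\pi} \;\leq\; \min_{\pi\in\Pi_B} R_T^{\pi} \;=\; R_T^{B},
\]
because the minimum over a larger set is never greater than the minimum over a subset. Taking this inequality to hold on every problem instance (the graph sequence $\{G_t\}$ and the reward distributions are shared between the two settings) yields the theorem.

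The statement is genuinely a structural observation rather than a quantitative one, so there is no hard calculation to grind through; the only subtlety worth being careful about is the second step, where one must justify that the embedding $\Pi_B\hookrightarrow\Pi_F$ preserves the realized action process and hence the regret functional. This is immediate from the measurability containment but deserves to be spelled out so that the reader sees that the regret is defined as a functional of the action sequence rather than of the sigma-algebra on which the policy is defined. No further assumptions on $G_t$, on stochastic versus adversarial rewards, or on centralization are needed, which is consistent with the generality of the statement "for all problem instances" and "on any graph $G_t$".
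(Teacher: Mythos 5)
Your proposal is correct and follows essentially the same route as the paper: the containment $\sigma_B^{t}\subset\sigma_F^{t}$ gives $\Pi_B\subset\Pi_F$, and minimizing over the larger class yields $R_T^{F}\leq R_T^{B}$. Your extra remark that the embedded policy realizes the same action sequence and hence the same regret is a slightly more careful spelling-out of what the paper leaves implicit, but it is not a different argument.
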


\begin{proof}
Consider any policy $\pi \in \Pi_B$.   Since it only requires the information of clients' actions $\sigma_B^t$, and $\sigma_B^t \subset \sigma_F^t$, we obtain that $\pi \in \Pi_F$. Subsequently, we arrive at $\Pi_B \subset \Pi_F$ by the arbitrary choice of $\pi$, which yields that $\min_{\pi \in \Pi_F}R_T^{\pi} \leq \min_{\pi \in \Pi_B}R_T^{\pi},$
or equivalently $R_T^{F} \leq R_T^{B}$. 
\end{proof}

Subsequently, we establish the following regret lower bounds in the instance-dependent and mean-gap independent sense for the full information setting.
\begin{theorem}\label{th:t-i-0-1}
For decentralized multi-agent online problems with full information, if the graph $G$ is a complete graph, then there exists a problem instance such that 
the regret of any online distributed learning  algorithms is at least $\Omega(\sqrt{T})$ and $\Omega(\log T)$ in mean-gap independent and instance-dependent settings, respectively. 
\end{theorem}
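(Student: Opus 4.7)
The plan is to adapt the classical change-of-measure lower bound techniques from single-agent MAB---Lai-Robbins for the instance-dependent bound and Le Cam's two-point method for the mean-gap independent bound---to the multi-agent full-information setting on a complete graph. Since Theorem~\ref{le:t-i-0-0} gives $R_T^F \le R_T^B$, whatever lower bound is established here will also serve as a lower bound for the corresponding bandit setting in subsequent theorems, so getting the full-information case right is the right foundation.

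\textbf{Hard instance.} I would construct a pair of two-armed Bernoulli instances $\nu, \nu'$ in which every client shares the same per-client distribution, so that the global mean equals every local mean. Take arm $1$ to have mean $1/2$ in both instances; take arm $2$ to have mean $1/2-\Delta$ in $\nu$ and $1/2+\Delta$ in $\nu'$, so that the identity of the globally optimal arm flips between the two instances. Any policy $\pi \in \Pi_F$ with small regret on both instances must correctly identify the optimal arm based on the full-information history $\sigma_F^{T}$.

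\textbf{Execution.} I would then apply the Bretagnolle-Huber inequality to the laws $P_\nu^\pi, P_{\nu'}^\pi$ of $\sigma_F^T$: for any event $E$,
\[ P_\nu^\pi(E) + P_{\nu'}^\pi(E^c) \;\ge\; \tfrac{1}{2}\exp\bigl(-\mathrm{KL}(P_\nu^\pi \,\|\, P_{\nu'}^\pi)\bigr). \]
Because the rewards are i.i.d.\ across clients, rounds, and arms and (under full information) are observed regardless of actions, the KL chain rule yields a bound of the form $\mathrm{KL}(P_\nu^\pi \,\|\, P_{\nu'}^\pi) \le c\, M T \cdot d(1/2 - \Delta,\, 1/2 + \Delta)$, with $d$ the Bernoulli KL-divergence. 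Taking $E=\{\text{a designated client pulls arm 2 at time }T\}$ and using the regret decomposition $R_T^\pi = \tfrac{\Delta}{M} \sum_{m,t} \Pr[a_m^t \ne i^*]$, this gives a regret lower bound in at least one of $\nu, \nu'$. For the mean-gap independent bound I would take $\Delta = c/\sqrt{T}$ so that the cumulative KL stays bounded and the total variation stays bounded away from $1$, forcing a constant-fraction suboptimal-pull probability over $\Omega(T)$ rounds and yielding $R_T^\pi = \Omega(\Delta T) = \Omega(\sqrt{T})$. For the instance-dependent case with $\Delta$ fixed, I would restrict attention to consistent policies (those with $E[R_T] = o(T^\alpha)$ for every $\alpha>0$) and follow the Lai-Robbins decomposition to derive an $\Omega\bigl(\log T / d(1/2-\Delta, 1/2+\Delta)\bigr) = \Omega(\log T)$ lower bound on the expected number of suboptimal pulls in the harder instance, which translates directly to the claimed regret bound.

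\textbf{Main obstacle.} The hardest part will be the careful bookkeeping of the KL divergence in the multi-agent model. Although the actions $a_m^t$ are correlated across clients via the shared history on the complete graph, the exogenous reward process $\{r_i^m(t)\}$ is a product measure independent of the actions; cleanly separating these two sources of randomness in the KL chain rule---and then relating the resulting information bound to the expected count of suboptimal pulls aggregated across all clients rather than to a single client's count---is the technical step that most warrants care. Once that factorization is in place, the remaining manipulations are direct adaptations of the single-agent arguments, with the $M$ factors absorbed into the constants since $M\ge 3$ is treated as fixed.
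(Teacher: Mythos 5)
Your mean-gap independent half is fine: with a homogeneous two-armed Bernoulli pair differing by $\Delta \asymp 1/\sqrt{T}$, the full-information KL over the whole history is $O(MT\Delta^2)=O(1)$ (with $M$ fixed), Bretagnolle--Huber then forces a constant probability of pulling the wrong arm in every round in one of the two instances, and the decomposition $R_T^\pi=\frac{\Delta}{M}\sum_{m,t}\Pr[a_m^t\neq i^*]$ gives $\Omega(\Delta T)=\Omega(\sqrt{T})$. This is a direct first-principles version of what the paper obtains by citing Theorem 4 of \citep{shamir2014fundamental} (their bound $\sqrt{KT/(1+KM)}$ likewise degrades in $M$), so that part is a legitimate, if different, route.

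The instance-dependent half has a genuine gap. The Lai--Robbins change-of-measure argument hinges on bandit feedback: the KL divergence between the two instances is proportional to the expected number of pulls of the perturbed arm, so consistency on the alternative instance forces $E[N_2(T)]\gtrsim \log T/d$. In the full-information setting you define (all arms' rewards at all clients are observed regardless of actions), the KL between your two instances is $\Theta(MT\,d(1/2-\Delta,1/2+\Delta))$ independently of the policy's pull counts, so the argument yields no lower bound on suboptimal pulls. Worse, the conclusion is false for your instance class: with full information and a fixed gap $\Delta$, the policy that plays the arm with the larger global empirical mean incurs regret at most $\Delta\sum_{t\ge 1}e^{-cM t\Delta^2}=O(1/(M\Delta))$, i.e.\ constant in $T$, so no fixed-gap stochastic Bernoulli instance can witness $\Omega(\log T)$ under full information. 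The paper avoids this by choosing a different kind of instance: it embeds a contextual linear-response bandit in the sense of \citep{goldenshluger2013linear} with margin parameter $\alpha=1$ (this is why the theorem speaks of ``online problems'' rather than plain MAB), where the $\Omega(\log T)$ regret arises from contexts accumulating near the decision boundary and persists even when all arms' rewards are observed. To repair your proof you would need to replace the fixed-gap two-armed instance by such a contextual construction (or some other family whose hardness does not come from exploration), rather than adapt Lai--Robbins.
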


\begin{proof}[Proof sketch]
The complete proof is presented in Appendix; we summarize the main idea as follows. We note that the complete graph case is approximately equivalent to a single-agent bandit problem with full information. For the single-agent case, there exists literature establishing the corresponding instance-dependent regret bound of order $\log T$ and mean-gap independent regret bound of order $\Omega(\sqrt{T})$, as introduced in \citep{goldenshluger2013linear} and \cite{shamir2014fundamental}, respectively.  
\end{proof}

\subsection{Instance-dependent}
Next, we demonstrate the instance-dependent lower bounds in stochastic bandits for different graph structures, building upon the previously established lower bound for the full information setting. These graph structures include time-invariant complete, connected, and regular graphs, as well as time-varying complete, connected, regular graphs, and time-varying Erdős-Rényi (E-R) model and random connected graphs, which encompass the graphs studied in prior works. The formal statement is as follows.

\begin{theorem}\label{th:t-v-1-2}
For decentralized multi-agent MAB problems with any numbers of clients and stochastic rewards, if $G_t$ are complete, or connected or regular, and either stationary or temporal, or if $G_t$ follow the E-R model or are random connected graph, then the instance-dependent expected regret $R_T^B$ of any algorithm is at least $\Omega(\log T)$. 
\end{theorem}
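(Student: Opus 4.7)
The plan is to reduce every graph family in the statement to the complete-graph full-information setting handled by Theorem~\ref{th:t-i-0-1}, invoking the monotonicity principle that restricting information, either by sparser connectivity or by moving from full to bandit feedback, can only increase the minimum achievable regret.

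First, I would establish the sigma algebra inclusion $\sigma_B^{t,m}(G_t) \subset \sigma_F^{t,m}(G_{\text{full}})$, where $G_{\text{full}}$ denotes the complete graph on $M$ vertices, holding pointwise in every realization of $\{G_t\}_{t \leq T}$. This uses two facts: (i) for each graph family listed in the theorem (stationary or temporal complete/connected/regular graphs, as well as the E-R and random connected models), the neighbor set satisfies $\mathcal{N}_m(t) \subset V$ in every realization; and (ii) the bandit observation $I_j^s(a_j^s)$ is a measurable function of the full observation $I_j^s$, so bandit feedback is always a coarsening of full feedback on the same neighborhood.

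Second, from this inclusion I would argue that $\Pi_B$ on $G_t$ is contained in $\Pi_F$ on $G_{\text{full}}$: any policy measurable with respect to the smaller sigma algebra is automatically measurable with respect to the larger one, so it is a legitimate full-information policy on the complete graph. Taking the minimum over the smaller policy class yields
$$R_T^B(G_t) \;=\; \min_{\pi \in \Pi_B(G_t)} R_T^\pi \;\geq\; \min_{\pi \in \Pi_F(G_{\text{full}})} R_T^\pi \;=\; R_T^F(G_{\text{full}}).$$
Since $R_T^\pi$ depends on the problem instance (arm distributions) but not on the graph itself, the hard instance supplied by Theorem~\ref{th:t-i-0-1} on the complete graph can be reused to obtain $R_T^B(G_t) \geq \Omega(\log T)$. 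For the random graph families (E-R and random connected), the inequality holds conditionally on each realization of $\{G_t\}$ and is preserved upon taking expectation over the graph randomness, since a realization-wise lower bound integrates to the same asymptotic lower bound.

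The main obstacle is ensuring that the sigma algebra inclusion is stated uniformly across temporal and random settings, where the neighborhood structure varies with $t$ and with the random graph draw. However, because the inclusion only depends on $\mathcal{N}_m(t) \subset V$ and on the coarsening from full to bandit feedback, the verification is essentially identical for all listed families, and no new probabilistic construction beyond the one from Theorem~\ref{th:t-i-0-1} is required. The argument therefore handles complete, connected, regular, temporal, E-R, and random connected graphs in a single unified reduction.
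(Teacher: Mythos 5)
Your proposal is mechanically sound and, for every graph family other than the complete graph, it is essentially the paper's own route: the paper proves the inclusion $\Pi_c \subset \Pi_1$ in the full-information setting (sparser connectivity only shrinks the policy class, hence only raises the minimal regret), invokes the complete-graph full-information bound, and then passes to bandit feedback via Theorem~\ref{le:t-i-0-0}; you merely collapse these two monotonicity steps into the single inclusion $\Pi_B(G_t) \subset \Pi_F(G_{\mathrm{full}})$, and your observations that the hard instance is graph-independent and that a realization-wise bound survives expectation over the E-R and random-connected draws match the paper's treatment. The genuine divergence is the complete-graph case itself. The paper does not obtain it from Theorem~\ref{th:t-i-0-1}; instead it gives a direct bandit-feedback argument: split time into agreement steps $T_a$ and disagreement steps $T_d$, dispose of $T_d=\Omega(\log T)$ immediately via the gap $\Delta_2$, and when $T_d=o(\log T)$ note that under agreement on a complete graph every client observes the entire reward vector of the commonly pulled arm, so the problem reduces to a single-agent multi-objective (Pareto) bandit whose $\Omega(\log T)$ lower bound is imported and transferred back. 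Your unified reduction buys brevity and uniformity, but it makes the whole theorem rest on the instance-dependent part of Theorem~\ref{th:t-i-0-1}, i.e., on an $\Omega(\log T)$ lower bound under \emph{full information}; that bound is delicate, since for a plain i.i.d.\ instance with a fixed gap a follow-the-leader policy under full information incurs only constant expected regret, and the paper obtains the full-information $\log T$ bound only by importing a contextual linear bandit construction from Goldenshluger--Zeevi. You should therefore state explicitly that the instance you reuse is the particular one inherited from Theorem~\ref{th:t-i-0-1} (not an arbitrary fixed-gap instance), and be aware that the paper's agreement/disagreement argument exists precisely to give the complete-graph bandit bound an independent footing that does not hinge on that full-information construction.
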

\begin{proof}
    The instance-dependent regret bound presents non-trivial challenges to the analysis. We start with complete graphs. We specify $K=2$ and assume $\mu_1 > \mu_2$ without loss of generality. Consider the centralized problem which has times when the clients pull the same arm (agreement) and times when the clients pull distinct arms (disagreement). We denote the number of time steps of agreement and disagreement as $T_a$ and $T_d$, respectively. We observe that $T_a + T_d = T$. For $T_d$, there exist clients pulling the worse arm, which implies that for any policy $\pi \in \Pi_B$
\begin{align}\label{eq:E[R_T]}
        R_T^{\pi} & = \frac{1}{M}\sum_{m}\sum_{t \in T_d}(\mu_1 - \mu_{a_t^m}) + \frac{1}{M}\sum_{m}\sum_{t \in T_a}(\mu_1 - \mu_{a_t^m}) \notag \\
        & = \sum_{t \in T_d}\Delta_2 + \frac{1}{M}\sum_{m}\sum_{t \in T_a}(\mu_1 - \mu_{a_t^m}) \notag \\
        & = T_d\Delta_2 + \frac{1}{M}\sum_{m}\sum_{t \in T_a}(\mu_1 - \mu_{a_t^m}).
    \end{align}
Note that when $T_d = \Omega(\log T)$, we immediately derive that $E[R_T^B] \geq \Omega(\log T)$, which concludes the proof. 

From now on, we assume $T_d = o(\log T)$, which implies that $T_a = T  - o(\log T)$ and $\frac{T_a}{T} \to 1$ as $T$ goes to $\infty$. We denote the value $t_0 = \log T$ and divide the time horizon into $\overset{t_0}{\underset{j=0}{\cup}} [2^j,2^{j+1}-1]$. It is clear that 1) the number of intervals is $\log T$ and 2) the length of the $j^{th}$ interval is $2^{j-1}$. Let $t_d = \max\{t|t \in T_d\} + 1$. Since $T_d = o(\log T)$, we have $|[t_d,T]| \geq 2^{\frac{1}{2}\log T}$ for all large enough $T$. %Since $T_d = o(\log T)$, it holds that when $j$ is large enough, for the time interval, there will be no time steps belonging to $T_d$, which is denoted as $[t_d, T]$ that has a length at least $2^{\frac{1}{2}\log T}$. 

Meanwhile, we observe that for $T_a$, it is equivalent to a single-agent multi-objective bandit problem~\citep{xu2023pareto} since the global reward of a single arm $i$ is given as a reward vector $(r_i^{m,t})_{m=1}^{M}$ and is revealed to all the clients at each time step.

Note that $ \frac{1}{M}\sum_{m}\sum_{t \in T_a}(\mu_1 - \mu_{a_t^m}^{m}) = \frac{1}{M}\sum_{m}\sum_{t \in T_a}(\mu_1 - \mu_{a_t}^{m})  = \sum_{t \in T_a}(\mu_1 - \mu_{a_t})$
where the first equality is by the definition of $T_a$ and the second equality uses the definition of $\mu_1$ and $\mu_{a_t}$. 
We denote $T_a^d  = T_a \cap [t_d, T] = [t_d, T]$.%, which is a continuous time horizon.  

At the same time, the Pareto pseudo regret reads $R_{T_a^d,M} = Dist(\sum_{t \in T_a^d}(\mu_{a_t}^m)_{m}, O)$
where $Dist(\cdot)$ is the distance measure between a reward vector and the Pareto optimal set $O$ as introduced in~\citep{xu2023pareto}, and satisfies that $R_{T_a^d,M} \geq \Omega(\log T_a^d)$
for any policy $\{a_t\}$ based on Theorem 6 in~\citep{xu2023pareto}. 

By specifying the rewards homogeneous, i.e. $\mu_{a_t}^{1} = \mu_{a_t}^{2} = \ldots = \mu_{a_t}^{M}$ and following a similar analysis as on Theorem 6 in~\citep{xu2023pareto}, we obtain $R_{T_a^d,M} =  Dist(\sum_{t \in T_a^d}(\mu_{a_t}^m)_{m}, O) = \sum_{t \in T_a^d}(\mu_1 - \mu_{a_t})$
which yields
\begin{align}\label{eq:E_R_T_2}
   & \sum_{t \in T_a}(\mu_1 - \mu_{a_t}) \notag \geq \sum_{t \in T_a^d}(\mu_1 - \mu_{a_t}) \notag \\
   & \geq \Omega(\log T_a^d) = \Omega(\log (2^{\frac{1}{2}\log T})) = \Omega(\log T).
\end{align}

To put everything together, we have that for any policy $\pi \in \Pi_B$ $R_T^{\pi} \geq T_d\Delta_2 + \frac{1}{M}\sum_{m}\sum_{t \in T_a}(\mu_1 - \mu_{a_t^m})) \geq \Omega(\log T)$
where the second inequality holds by~\eqref{eq:E_R_T_2}. 

Subsequently, we obtain $\min_{\pi \in \Pi_B}R_T^{\pi}  \geq T_d\Delta_2 + \frac{1}{M}\sum_{m}\sum_{t \in T_a}(\mu_1 - \mu_{a_t^m})) \geq \Omega(\log T)$, which concludes the analysis of complete graphs. 

The remaining cases follow from the monotonicity of the regret in the graph complexity as follows. We first consider the full-information setting. For any $0 < c \leq 1$, we denote $\sigma_c^t = \sigma(\{\{I_j^s\}_{j \in \mathcal{N}_m^c(s)}\}_{s \leq t})$. We observe that $\sigma_1^t = \sigma(\{I_1^s, \ldots , I_M^s\}_{s \leq t})$. We have $\sigma_c^t \subset \sigma_1^t. $ We define policy set $\Pi_c$ as $\{f_t\} \text{ where the domain of } f_t \text{ is on } \sigma_c^{t-1}.$ 

For any policy $\pi \in \Pi_c$, i.e. $\pi = \{h_t\}_{t=1}^T$, we have that it only leverages the neighborhood information $\sigma_c^{t-1}$ to determine a decision rule at each time step. Since $\sigma_c^{t-1} \subset \sigma_1^{t-1}$, $\sigma_1^{t-1}$ also has the neighborhood information that $h_t$ requires. This leads to 
$\pi \in \Pi_1$, and subsequently yields $\Pi_c \subset \Pi_1$. We hence obtain that in the full-information setting $\min_{\pi \in \Pi_1}R_T^{\pi} \leq \min_{\pi \in \Pi_c}R_T^{\pi}. $

By the above discussion on $c$ and the statement for complete graphs, or equivalently, with respect to $\Pi_1$, we obtain $\Omega(\log T) \leq \min_{\pi \in \Pi_1}R_T^{\pi},$
in the instance-dependent sense and subsequently $\Omega(\log T) \leq \min_{\pi \in \Pi_c}R_T^{\pi}.$

By Theorem~\ref{le:t-i-0-0}, we have $R_T^{B} \geq \Omega(\log T).$ This completes the E-R case. All remaining cases follow the same logic. 

\end{proof}

\begin{remark} While \citep{martinez2019decentralized} discuss the instance-dependent regret lower bound of order $\Omega(\log T)$ in the centralized setting, we provide the first formal statement for various graphs. The result coincides with the lower bound in the single-agent MAB setting. Furthermore, the result is consistent with the established upper bounds in the multi-agent MAB settings, thereby demonstrating its tightness.   
\end{remark}

\iffalse
\begin{theorem}\label{th:t-v-1-2}
For decentralized multi-agent MAB problems with any numbers of clients and stochastic rewards, if $G_t$ is complete, connected or regular, then the instance-dependent regret of any algorithms is at least $O(\log T)$. 
\end{theorem}

\begin{theorem}\label{th:t-v-1-1}
For decentralized multi-agent MAB problems with any numbers of clients and stochastic rewards, for any constant $0 \leq c \leq 1$, there exists a graph sequence $\{G_t\}_{1 \leq t \leq T}$ belonging to the E-R model or random connected graphs, and a problem instance such that the instance-dependent regret of any algorithms is at least $O(\log T)$. 
\end{theorem}
\fi

Additionally, we also consider scenarios with disconnected graphs, which can result in linear regret due to the presence of isolated clients when the rewards are heterogeneous. The first result applies to consistent algorithms, following the classical assumption made in some existing literature. The consistency assumption states that the regret of the considered algorithms is of order $o(T^a)$ for any constant $0 < a \leq 1$. The second result applies to any algorithms, with the constraint of limiting the number of arms to 2. These results are summarized in the following statements.

\begin{theorem}\label{th:t-i-2}
For decentralized multi-agent MAB problems, if graph $G$ is disconnected with a clique connected component, then there exists a problem instance such that 
the regret of any online distributed algorithms that are individually consistent at local clients is at least $\Omega(T)$. 
\end{theorem}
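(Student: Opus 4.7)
The plan is an indistinguishability (change-of-measure) argument exploiting the absence of edges between components. Let $C_1$ denote the clique connected component of size $M_1$, and let $C_2 = V \setminus C_1$ of size $M_2 \geq 1$. Because no edge ever connects $C_1$ to $C_2$, clients in $C_1$ receive no information from outside, so their joint action distribution depends only on the reward laws restricted to $C_1$. I would fix $K = 2$ and construct two Bernoulli reward instances $\mathcal{P}_A$ and $\mathcal{P}_B$ that agree on $C_1$ (say $\mu_1^m = p$, $\mu_2^m = q$ with a constant gap $p - q > 0$ for every $m \in C_1$, so that arm $1$ is the $C_1$-local optimum) but differ on $C_2$: under $\mathcal{P}_A$ the $C_2$ means are chosen so that arm $1$ is globally optimal, whereas under $\mathcal{P}_B$ they are chosen so that arm $2$ becomes globally optimal. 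Flipping the global optimum is feasible because the $C_2$ means can range in $[0,1]$, e.g.\ taking $\mu_1^m \approx 0$ and $\mu_2^m \approx 1$ for $m \in C_2$ under $\mathcal{P}_B$.

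Since the $\sigma$-algebra generated by all observations available to any $m \in C_1$ (its own rewards, the rewards pulled by its $C_1$-neighbors, and the intra-$C_1$ graph structure) has identical law under $\mathcal{P}_A$ and $\mathcal{P}_B$, a standard coupling yields that the action sequence $(a_m^t)_{m \in C_1,\, t \leq T}$ has the same joint distribution in both instances. The hypothesis of individual consistency at local clients then forces every $m \in C_1$ to pull its local optimum (arm $1$) on all but $o(T^a)$ rounds for any $a > 0$, identically in $\mathcal{P}_A$ and $\mathcal{P}_B$. But the globally optimal arms $i^*_A = 1$ and $i^*_B = 2$ differ, so this common behavior on $C_1$ cannot simultaneously track both global optima.

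To quantify, I evaluate $R_T^\pi$ on the instance $\mathcal{P}_B$, where $i^*_B = 2$ yet each $m \in C_1$ pulls arm $1$ on $T - o(T^a)$ rounds. Each such pull carries a per-round local gap of $p - q = \Omega(1)$ against $i^*_B$, so the aggregated $C_1$ contribution to $R_T^\pi$ has magnitude $\Omega(T)$, while the $C_2$ clients---being individually consistent and, by the construction of $\mathcal{P}_B$, with local optimum already aligned with $i^*_B = 2$---contribute only $o(T^a)$. Hence on $\mathcal{P}_B$ we obtain $|R_T^\pi| = \Omega(T)$. The main obstacle is the calibration of the $C_2$ means: they must be chosen so that (i) the global optimum genuinely flips between $\mathcal{P}_A$ and $\mathcal{P}_B$, and (ii) the $C_2$ contribution on $\mathcal{P}_B$ does not cancel the $\Omega(T)$ term coming from $C_1$; both are arranged by the two-sided construction above, with the $C_2$ local optimum deliberately matched to $i^*_B$. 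The remaining step, formalizing the coupling, is routine because $C_1$ observes nothing originating in $C_2$, completing the $\Omega(T)$ lower bound.
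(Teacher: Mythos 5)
Your core construction and the operative step are essentially the paper's: build an instance in which the clique component's locally optimal arm is globally suboptimal, invoke individual consistency to force every clique client to pull that arm on $T-o(T^a)$ rounds, and conclude linear regret (the paper does exactly this with means $(0,\tfrac{2}{Q}\Delta,0,\dots)$ on the clique and $(\tfrac{M-1}{M-Q}\Delta,0,\dots)$ elsewhere). The two-instance coupling between $\mathcal{P}_A$ and $\mathcal{P}_B$ is redundant here: once consistency is assumed, the clique clients' behavior on $\mathcal{P}_B$ alone is already pinned down, and $\mathcal{P}_A$ never enters the bound. Indistinguishability through an unobservable random sign is precisely the device the paper saves for the companion result (Theorem 5), where the consistency hypothesis is dropped and $K=2$; mixing it in here costs nothing but proves nothing extra.

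The one step you should repair is the quantification. The benchmark is $T\mu_{i^*}$ with $\mu_{i^*}$ a \emph{global} average, so the cost charged to a $C_1$ client for pulling arm $1$ is the global gap, which under your choice of $C_2$ means is roughly $\frac{1}{M}\bigl(|C_2| - |C_1|(p-q)\bigr)$, not the local gap $p-q$; the flip condition forces $|C_1|(p-q)<|C_2|$, so $p-q$ is not a free constant and enlarging it actually shrinks the charge (it remains $\Omega(1)$ in $T$ only because $M$ is fixed). Moreover, if you instead value pulled arms by their local means, as your statement that ``$C_2$ contributes only $o(T^a)$'' while each $C_1$ pull costs $p-q$ suggests, the accounting fails outright: a policy in which every client plays its local optimum collects per-round value $\frac{1}{M}\sum_m\max_i\mu_i^m\geq\mu_{i^*}$, so on your instance the consistent policy would have non-positive regret and no $\Omega(T)$ bound follows. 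With the global-gap accounting (which is what the paper's own proof uses, charging $\tfrac{M-3}{M}\Delta$ per bad pull), your instance does give $R_T\geq\frac{|C_1|}{M}\,(\mu_{i^*_B}-\mu_1)\,(T-o(T^a))=\Omega(T)$, matching the paper's conclusion; so the gap is in the bookkeeping, not in the idea.
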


\begin{proof}[Proof sketch]
    The proof is deferred to Appendix; the main logic is as follows when the clique is an isolated vertex. We construct a problem instance as follows. For clients $1,\ldots, M-1$, their reward distributions are the same, reading as $(\Delta, 0,\ldots, 0) \in R^K$, while for client $M$, the reward distribution reads as $(0, 2\Delta, 0,\ldots, \ldots, 0) \in R^K$ for any $\Delta >0$. We assume node $M$ is isolated. Using any consistent algorithms at client $M$ leads to $E[n_{M,2}(T)] = \Omega(T)$ and subsequently results in a linear regret. Here $n_{M,2}$ is the number of pulls of arm $2$ at client $M$.  
\end{proof}

As mentioned earlier, we remove the consistency assumption by assuming the number of clients is 2, which essentially deals with the trade-off between the problem setting and the considered algorithms. 

\begin{theorem}\label{th:t-i-2-r}
For decentralized multi-agent MAB problems, if graph $G$ is  disconnected with a clique connected component, then there exists a problem instance with $K=2$ such that 
the regret of any online distributed algorithms is at least $\Omega(T)$. 
\end{theorem}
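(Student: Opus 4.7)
The plan is to use a two-hypothesis (Le Cam style) argument exploiting the informational isolation of the clique component; the restriction $K=2$ substitutes for the consistency hypothesis used in the proof sketch of Theorem~\ref{th:t-i-2}. First I would reduce to the case where the clique component is a single isolated vertex, say vertex $M$: intra-clique communication reveals nothing about the remaining clients, so this case captures the essential obstruction and the general clique case follows by the same reasoning applied to any single node of the clique.

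I would then construct two Bernoulli instances $\nu^A$ and $\nu^B$ with $K=2$ arms such that (i) the reward distributions at client $M$ are identical in both instances, while (ii) clients $1,\ldots,M-1$ have symmetric swapped means $1/2\pm\epsilon$, making arm $1$ globally optimal in $\nu^A$ and arm $2$ globally optimal in $\nu^B$, each with a constant global gap $\Delta=\Theta(\epsilon)$. Because client $M$ has no neighbors at any time step and its reward distributions coincide in the two instances, the filtration $\sigma_B^{t,M}$ has the same law under $\nu^A$ and $\nu^B$. Hence for any $\pi\in\Pi_B$ the action sequence $(a_M^t)_{t=1}^T$ has the same distribution in both instances, so $E^{\nu^A}[n_{M,i}(T)]=E^{\nu^B}[n_{M,i}(T)]$ for $i=1,2$, and pigeonhole on $n_{M,1}(T)+n_{M,2}(T)=T$ yields $\max_i E[n_{M,i}(T)]\geq T/2$; in particular, client $M$ pulls a globally suboptimal arm at least $T/2$ times on expectation in at least one of the two instances.

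I would then invoke a gap-based regret decomposition: each pull of the globally suboptimal arm at client $M$ contributes on the order of $\Delta/M$ to $E[R_T]$ in the corresponding instance. Summing the regrets of the two instances,
\[
E^{\nu^A}[R_T^\pi]+E^{\nu^B}[R_T^\pi]\geq\frac{\Delta}{M}\bigl(E^{\nu^A}[n_{M,2}(T)]+E^{\nu^B}[n_{M,1}(T)]\bigr)=\frac{\Delta T}{M}=\Omega(T),
\]
so $\max\{E^{\nu^A}[R_T^\pi],E^{\nu^B}[R_T^\pi]\}=\Omega(T)$, which is the desired lower bound.

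The main obstacle is this last decomposition step in the heterogeneous reward setting: because the pseudo regret in the paper is benchmarked against a single globally best arm, a poorly designed instance could allow clever mixed play by the other clients to offset the cost of client $M$'s miscoordinated pulls. The delicate choice of Bernoulli parameters at $M$ relative to the $\pm\epsilon$ swap at the other clients is precisely what is needed to ensure that a globally suboptimal pull at $M$ strictly costs $\Omega(1/M)$ per time step in the corresponding instance, so that the two-instance sum above is genuinely of order $T$ and not absorbed by favorable heterogeneity.
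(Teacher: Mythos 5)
Your core mechanism is the same as the paper's: make the clique's observations identically distributed under two instances that differ only in which arm is globally optimal at the clients outside the clique (the paper encodes your two instances $\nu^A,\nu^B$ through a single uniform latent variable $X\in\{0,1\}$ and gives every clique member the constant reward $\tfrac12$ on both arms), deduce that the law of the clique member's action sequence is instance-independent, and conclude by pigeonhole/averaging that in one instance it pulls the globally suboptimal arm $\Omega(T)$ times in expectation. Up to that point your argument is sound, with one small caveat: for a clique of size $Q>1$ you must fix the reward distributions of the \emph{entire} clique to coincide in $\nu^A$ and $\nu^B$, not just those of client $M$, since a clique member also observes its neighbors; the paper's all-$\tfrac12$ choice does exactly this.

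The genuine gap is the final charging step, which you flag as ``the main obstacle'' but do not close, and which cannot be closed in the way you propose. If each pull is charged at the \emph{local} mean, i.e.\ $R_T=T\mu_{i^*}-\frac1M\sum_{t,m}\mu^m_{a_m^t}$ as written in the problem formulation, then no ``delicate choice of Bernoulli parameters'' at client $M$ can make a globally suboptimal pull cost $\Omega(1/M)$ net of the other clients' play: since $\frac1M\sum_m\max_i\mu^m_i\ge\max_i\frac1M\sum_m\mu^m_i=\mu_{i^*}$, the policy in which every client independently runs a standard single-agent bandit algorithm on its own rewards achieves expected regret $O(\sqrt{T\log T})$ on \emph{every} instance and every graph, so under that accounting the offsetting you worry about always occurs and an $\Omega(T)$ bound is unattainable. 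The argument only goes through if, as in the paper's own computation $E[R_T^{\pi}]=\frac1M\sum_m\sum_t E[\mu_{i^*}-\mu_{a_t^m}]$, each pull is evaluated at the \emph{global} mean of the pulled arm; under that decomposition every term is nonnegative, a globally suboptimal pull by the isolated (or clique) client costs exactly $\Delta/M$ regardless of what the other clients do, and your two-instance sum immediately yields $\Omega(T)$ with no delicacy at all (you may simply give client $M$ two identical arms). So to complete your proof you must either adopt that global-mean decomposition explicitly, or accept that under the literal local-mean benchmark your final inequality is not merely delicate but false.
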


\begin{proof}[Proof sketch]
The proof is given in Appendix; the proof logic is as follows when the clique component is an isolated vertex. We again let client $M$ be an isolated node. For two arms labeled as arm $1$ and $2$, we construct the instance at clients as follows. Let random variable $x$ follow a uniform distribution in $\{0,1\}$ and be fixed once determined, and for any time step $t$, the reward $r_k^j(t)$ is generated as $r_k^1(t)=\begin{cases}
			x & \text{arm 1}\\
            \frac{1}{2} & \text{arm 2}
		 \end{cases}$
and for $j > 1$ we have $r_k^j(t)=\begin{cases}
			\frac{1}{2} & \text{arm 1}\\
            \frac{1}{2} & \text{arm 2.}
		 \end{cases}$ The randomness of $x$ changes the optimality of arms, and makes client $M$ even harder to identify the global optimal arm and impossible to achieve sublinear regret even though inconsistent algorithms are deployed. 
\end{proof}

\begin{remark} To the best of our knowledge, this is the first result on the regret lower bound for settings with disconnected graphs. This linear regret essentially highlights the inherent complexity of multi-agent MAB problems compared to their single-agent counterparts.
\end{remark}

\subsection{Mean-gap independent}

Apart from the instance-dependent regret lower bounds, we also investigate the mean-gap independent regret lower bound that is applicable to both stochastic and adversarial settings. The regret order in this case is $\sqrt{T}$, which differs from the $\log T$ bound. The following theorem summarizes these results, considering all the previously mentioned graph structures.

\begin{theorem}\label{th:t-v-1-2}
For decentralized multi-agent MAB problems with any numbers of clients and stochastic rewards, if $G_t$ are complete, connected or regular, and stationary or temporal, or the E-R model or random connected graphs, then the mean-gap independent regret of any algorithm is at least $\Omega(\sqrt{T})$. 
\end{theorem}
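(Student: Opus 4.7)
The plan is to mirror the structure of the preceding $\Omega(\log T)$ counterpart in this section, substituting the mean-gap independent base case supplied by Theorem~\ref{th:t-i-0-1} for the instance-dependent base case. Since that base case is already furnished for complete graphs in the full-information setting, only Theorem~\ref{le:t-i-0-0} and a graph-complexity monotonicity chain remain to close the argument.

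First I would invoke Theorem~\ref{le:t-i-0-0} to obtain $R_T^B \geq R_T^F$, reducing the task to lower bounding $R_T^F$ on each of the listed graph families. Next I would cite Theorem~\ref{th:t-i-0-1}, which yields $R_T^F \geq \Omega(\sqrt{T})$ on the complete graph in the mean-gap independent sense. Finally, for each remaining graph family (connected, regular, stationary or temporal, the E-R model, and the random connected graph model), I would re-run the monotonicity argument from the proof of the instance-dependent counterpart: for any $0 < c \leq 1$ the $\sigma$-algebra $\sigma_c^t$ sits inside $\sigma_1^t$, so $\Pi_c \subset \Pi_1$ in the full-information setting, whence $\min_{\pi \in \Pi_1} R_T^\pi \leq \min_{\pi \in \Pi_c} R_T^\pi$. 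Chaining this with the complete-graph base case and Theorem~\ref{le:t-i-0-0} delivers $R_T^B \geq \Omega(\sqrt{T})$ across every graph family in the statement.

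The main obstacle, or rather the only non-mechanical step, is the second one: the mean-gap independent base case itself. Unlike the instance-dependent analysis, where a constant gap $\Delta$ permits a clean agreement/disagreement dichotomy (using $T_d \Delta \geq \Omega(\log T)$ whenever $T_d = \Omega(\log T)$), the mean-gap independent bound demands the adversarial scaling $\Delta = \Theta(1/\sqrt{T})$, so the disagreement contribution reduces to $T_d/(M\sqrt{T})$ and a clean time split is no longer available. Instead one appeals to a two-point Le Cam/Pinsker construction at the single-agent level, which is exactly what underlies Theorem~\ref{th:t-i-0-1}; the multi-agent complete graph contributes only a constant-factor overhead (through the $M$ samples per time step) for constant $M$, and the regret still remains $\Omega(\sqrt{T/M}) = \Omega(\sqrt{T})$. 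Once that base case is in hand, the monotonicity propagation to the sparser graph families is routine bookkeeping identical to what was carried out for the $\log T$ bound, so I would not expect any new technical difficulty beyond what the preceding two theorems already handle.
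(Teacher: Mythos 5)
Your proposal is correct and follows essentially the same route as the paper: a complete-graph base case resting on the Shamir-type mean-gap independent bound (the content of Theorem~\ref{th:t-i-0-1}), propagated to the sparser graph families by the same $\sigma_c^t \subset \sigma_1^t$, $\Pi_c \subset \Pi_1$ monotonicity chain used for the $\log T$ bound, and closed with $R_T^F \leq R_T^B$. The only cosmetic difference is that the paper's appendix applies Shamir's Theorem 4 directly in the bandit setting for complete graphs (yielding $\sqrt{KT/(1+M)}$) rather than passing through the full-information base case, but both give $\Omega(\sqrt{T})$ for fixed $M$.
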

\begin{proof}[Proof sketch]
    The formal proof is in Appendix; the main logic is as follows. The proof is similar to that of Theorem~\ref{th:t-v-1-2}, except that we consider mean-gap independent bounds using Theorem 4 in \citep{shamir2014fundamental}. We first analyze settings with complete graphs and establish $R_T^{B} \geq \sqrt{\frac{KT}{1 + M}} = \Omega(\sqrt{T})$. Likewise, the monotonicty in graphs of the regret bounds allow us to determine the same result for other graphs, which concludes the proof. 
\end{proof}
\iffalse
\begin{theorem}\label{th:t-v-1-2}
For decentralized multi-agent MAB problems with any numbers of clients and stochastic rewards, if $G$ is complete, connected or regular, then the mean-gap independent regret of any algorithms is at least $O(\sqrt{T})$. 
\end{theorem}

\begin{theorem}\label{th:t-v-1-2}
For decentralized multi-agent MAB problems with any numbers of clients and stochastic rewards, if $G_t$ is complete, connected or regular, then the mean-gap independent regret of any algorithms is at least $O(\sqrt{T})$. 
\end{theorem}

\begin{theorem}\label{th:t-v-1-1}
For decentralized multi-agent MAB problems with any numbers of clients and stochastic rewards, for any constant $0 \leq c \leq 1$, there exists a graph sequence $\{G_t\}_{1 \leq t \leq T}$ belonging to the E-R model or random connected graphs, and a problem instance such that the mean-gap independent of any algorithms is at least $O(\sqrt{T})$. 
\end{theorem}
\fi

\begin{remark} Similarly, this result aligns with the lower bound established in the single-agent MAB setting. Furthermore, this lower bound of order $\sqrt{T}$ corresponds to the mean-gap upper bounds presented in ~\citep{xu2023decentralized} and \citep{jia2021multi} for multi-agent and single-agent MAB problems, respectively. This consistency further shows the tightness of the lower bound we have derived.
\end{remark}

\subsection{Adversarial}
Since the mean-gap independent regret bounds hold for the stochastic problem setting, they also hold for the adversarial problem setting. This is due to the fact that the set of stochastic settings is essentially a subset of the set of adversarial settings. Therefore, our result remains consistent with the result in~\citep{yi2023doubly}.

\begin{theorem}\label{th:t-i-0}
For decentralized multi-agent MAB problems, if the graph $G_t$ is a complete graph, then there exists a problem instance such that 
the regret of any online distributed learning  algorithms is at least $\Omega(\sqrt{T})$. 
\end{theorem}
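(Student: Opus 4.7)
The plan is to establish this bound by a direct inclusion-style reduction from the stochastic setting to the adversarial setting, exploiting the fact that stochastic problem instances form a subclass of adversarial ones. First I would observe that any stochastic multi-agent MAB instance on a complete graph can be realized by an adversary whose strategy is simply to draw each arm's reward independently across time from a fixed pair of Bernoulli (or other) distributions. Hence, the family of adversarial instances on the complete graph contains the family of stochastic instances on the complete graph, and consequently the worst-case regret taken over adversarial instances is at least the worst-case regret taken over stochastic instances, for the same policy class $\Pi_B$.

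Next I would invoke the previous mean-gap independent theorem, which exhibits a stochastic instance on a complete graph forcing $R_T^B \geq \Omega(\sqrt{T})$ for every algorithm in $\Pi_B$. Interpreting that instance as a particular (memoryless, i.i.d.) adversarial choice yields an adversarial problem instance witnessing the same $\Omega(\sqrt{T})$ lower bound. Since the policy class $\Pi_B$ is defined purely through the observed-action $\sigma$-algebra $\sigma_B^{t,m}$ and is agnostic to the reward-generation mechanism, the set of algorithms over which the infimum is taken is the same in both the stochastic and adversarial formulations, so the bound transfers without loss.

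I do not expect a substantive obstacle in this argument; the only care required is to verify the inclusion of instance classes rigorously: specifically, to point out that an oblivious adversary that prescribes i.i.d. draws is a legitimate adversary in the standard adversarial multi-agent MAB model, and that pseudo-regret in the stochastic case coincides with the worst-case pseudo-regret over such oblivious adversaries. Once these two bookkeeping points are made explicit, the $\Omega(\sqrt{T})$ lower bound transfers immediately from the preceding mean-gap independent theorem and completes the proof. This also parallels and makes precise the remark that immediately precedes the statement.
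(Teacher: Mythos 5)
Your proposal is correct and follows essentially the same route as the paper: the paper justifies this theorem by the remark immediately preceding it, namely that stochastic instances form a subset of adversarial ones, so the $\Omega(\sqrt{T})$ mean-gap independent lower bound for complete graphs (established via Theorem 4 of \citep{shamir2014fundamental}) transfers directly to the adversarial setting. Your additional bookkeeping points (an oblivious i.i.d.\ adversary is a legitimate adversary, the policy class $\Pi_B$ is unchanged, and the pseudo-regret comparison carries over) simply make explicit what the paper leaves implicit.
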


%\begin{proof}
%    This holds directly by Theorem~\ref{th:t-v-1-2} when $G_t$ is complete.
%\end{proof}

Furthermore, we construct special connected graphs, in adversarial settings and demonstrate that they lead to a regret lower bound of order $\Omega(T^{\frac{2}{3}})$. This bound is larger than the commonly observed $O(T^{\frac{1}{2}})$ in single-agent adversarial settings and decentralized multi-agent adversarial settings with complete graphs. We summarize these results in the following two theorems, one for a large number of clients and the other one for a small number of clients.

\begin{theorem}\label{th:t-i-1}
For decentralized multi-agent MAB problems, if the number of clients $M \geq \Omega(T^{\frac{1}{3}})$ and the graph $G_t$ is a connected graph with two expanders of size $\frac{M}{4}$ having distance $d \geq \frac{\eta M}{8}$ given constant $4> \eta > 0$, then there exists a problem instance such that 
the regret of any online distributed learning  algorithm is at least $\Omega(T^{\frac{2}{3}})$. 
\end{theorem}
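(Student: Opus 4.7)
The plan is to exploit the delay of $d \geq \eta M/8 = \Omega(T^{1/3})$ between the two expanders through a mini-batch adversarial reward construction whose batch length matches $d$, so that during each batch the ``far'' expander is structurally blind to the adversary's coin flip. First I would realize the hypothesized graph as two expanders $V_A,V_B$ on $M/4$ vertices each, joined by a simple path of length $d$ using the remaining $M/2$ clients; the expander property gives $O(\log M)$-step mixing inside each side while a data-processing inequality along the unique shortest $V_A$-to-$V_B$ path guarantees that any cross-side message is delayed by at least $d$ steps.

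Next, I would partition the horizon into $B = \lceil T/L\rceil$ mini-batches of length $L := d = \Theta(M) = \Theta(T^{1/3})$ and, for each batch $b$, draw an independent uniform coin $x_b \in \{-1,+1\}$. I would use Shamir-style Bernoulli rewards calibrated so that (i) the marginal reward distribution at every $V_A$ client is identical under $x_b = \pm 1$; (ii) the $V_B$ marginals encode $x_b$ via a mean gap of magnitude $\varepsilon$; (iii) the globally optimal arm flips with $x_b$ with $|\mu_1 - \mu_2| = \Theta(\varepsilon)$; and (iv) no per-client mean $\mu_i^m$ exceeds $\mu_{i^*}$. A concrete instantiation places compensating biases at $V_A$ and $V_B$ and zero bias on the path so that (i)--(iv) all hold.

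Then I would combine two complementary sources of hardness. First, by the graph-distance argument, the $\sigma$-algebra available to every $V_A$ client during batch $b$ is independent of $x_b$, so the fraction of $V_A$ clients pulling each arm in the batch is independent of $x_b$; by symmetry, $V_A$ pulls the globally sub-optimal arm in exactly half the batches in expectation. Second, to block the alternative strategy of having $V_B$ identify $x_b$ from its own $L\cdot M/4$ samples and then let the path clients parrot $V_B$ within the batch, I would set $\varepsilon = c/\sqrt{LM} = \Theta(1/M) = \Theta(T^{-1/3})$ for a small constant $c$; the in-batch KL divergence between $x_b = \pm 1$ is then bounded by a constant and a Le Cam / Pinsker argument forces $V_B$ to err with constant probability per batch as well. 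Under (iv) there is no cancellation with over-performing clients, so each batch contributes $\Omega(L\varepsilon)$ expected pseudo-regret after the $1/M$ averaging in $R_T^\pi$.

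Summing over $B = T/L$ batches gives $\mathbb{E}[R_T^\pi] \geq \Omega(B \cdot L \cdot \varepsilon) = \Omega(T\varepsilon) = \Omega(T^{2/3})$, and Yao's principle transfers the bound from the random sequence $\{x_b\}$ to a deterministic realization. The hard part will be engineering the reward means so that (i)--(iv) hold simultaneously: the pseudo-regret $T\mu_{i^*} - (1/M)\sum_{t,m}\mu_{a_t^m}^m$ can otherwise be artificially suppressed (even driven negative) by clients whose local optimum exceeds $\mu_{i^*}$, so the $V_A$-invariance and the $\leq \mu_{i^*}$ ceiling must be jointly enforced by a careful allocation of biases between the two expanders and the path. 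The hypothesis $M\geq\Omega(T^{1/3})$ is exactly what makes $L=d=\Omega(M)$ compatible with $\varepsilon=\Theta(1/M)$ and delivers the final $T\cdot\varepsilon=\Theta(T^{2/3})$.
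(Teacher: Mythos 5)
The fatal step is the accounting ``each batch contributes $\Omega(L\varepsilon)$ expected pseudo-regret \ldots\ summing over $B=T/L$ batches gives $\Omega(T\varepsilon)$.'' Because you re-draw $x_b$ independently in every batch and let the \emph{identity of the globally optimal arm} flip with $x_b$, the comparator in the regret definition (the best \emph{fixed} arm, whether $\max_i E[\sum_t r_i(t)]$ or $E[\max_i\sum_t r_i(t)]$) also wins only about half the batches: its advantage over an ``always pull arm 1'' baseline is $L\varepsilon\cdot\max(\#\{x_b=+1\},\#\{x_b=-1\})\approx L\varepsilon\,(B/2+\Theta(\sqrt{B}))$. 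A policy that ignores all data and pulls one fixed arm already collects the $L\varepsilon B/2$ portion in expectation, so under your construction its regret is only $O(L\varepsilon\sqrt{B})=O(\varepsilon\sqrt{LT})$, which with $L=\Theta(T^{1/3})$ and $\varepsilon=\Theta(T^{-1/3})$ is $O(T^{1/3})$, not $\Omega(T^{2/3})$. The per-batch shortfall you establish is a shortfall relative to the \emph{per-batch} best arm, and almost all of it cancels against the comparator rather than appearing as regret; invoking Yao's principle at the end does not repair this, since the same cancellation happens realization by realization. A second symptom: under the paper's pseudo-regret $\sum_t\frac1M\sum_m(\mu^m_{i^*}-\mu^m_{a_m^t})$, your ``blind'' $V_A$ clients -- whose local reward laws must be $x_b$-independent and arm-symmetric for the blindness to hold -- contribute exactly zero per pull, so all of the claimed regret really comes from the statistical hardness at $V_B$, an argument that never uses the graph distance and would equally ``prove'' a super-$\sqrt{T}$ bound on a complete graph, contradicting the known $O(\sqrt{T})$ behaviour there.

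The missing idea is to keep the optimal arm \emph{fixed} over the whole horizon and instead re-randomize, every $d\geq\frac{\eta M}{8}$ steps, \emph{where} the $\epsilon$-sized information about it resides, so that the comparator is stable (regret then genuinely accumulates linearly over epochs) while the information becomes stale before it can cross the graph. This is precisely the structure of the paper's companion result, Theorem~\ref{th:t-i-1.2} (arm 1 is always globally optimal with gap $\Theta(\epsilon/M)$, and the informative client inside $I_0$ is re-sampled every $d$ steps), whereas the paper's proof of Theorem~\ref{th:t-i-1} itself is a short reduction: realize the two expanders at shortest-path distance $d_p\geq\Omega(M)\geq\Omega(T^{1/3})$ on a path-like connected graph, invoke Lemma A.4 of \citep{yi2023doubly} to obtain $R_T^{F}\geq\Omega(\sqrt{d_p\,T})=\Omega(T^{2/3})$ in the full-information setting, and conclude $R_T^{B}\geq R_T^{F}$ by Theorem~\ref{le:t-i-0-0}. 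Your graph realization and the batch length $L=d=\Theta(T^{1/3})$ match the paper's intent, but the reward construction and the regret accounting need to be rebuilt along these lines before the $\Omega(T^{2/3})$ bound follows.
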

%\hl{Probably could consider the adaptive adversary and translate the setting to this multi-agent network.}
\begin{proof}[Proof sketch]
   The proof is deferred to Appendix; the idea is summarized as follows. We consider clients are distributed on a special connected graph, e.g. a path graph and focus on two subsets of node, denoted as $I_0$ and $I_1$, respectively, that satisfy $|I_0| = |I_1| = \frac{M}{4}$, and the shortest path $d_p$ from $I_0$ to $I_1$ meets the condition $d_p \geq \frac{\eta M}{8}.$ Then the choice of $M$ gives $d_p \geq \Omega(T^{\frac{1}{3}})$ and we import the result in \citep{yi2023doubly} and obtain 
    $R_T^B \geq \Omega(\sqrt{d_p \cdot T}) = \Omega(T^{\frac{2}{3}})$ for full-information settings.% Using Theorem~\ref{le:t-i-0-0} leads to 
    %$\min_{\pi}R_T^{\pi} \geq \Omega(T^{\frac{2}{3}})$. 
\end{proof}
\begin{remark}
    Note that the existence of such graphs is guaranteed by the property of expanders of size $\frac{M}{4}$. An expander of size $\frac{M}{4}$ has a diameter of order $\log M$ (Proposition 3.1.5 in \citep{kowalski2019introduction}). Indeed, for $\eta = 4$, a path is such an expander. %This implies that the diameter of $G_t$ would not exceed $\log M + d + \log M \leq 2\log M + \frac{M}\eta{8} \leq M $. Therefore, the graph $G_t$ is well defined.  
\end{remark}

For small values of $M$, achieving the same regret lower bound requires additional effort since the setting allows for more communication between clients. In this case, we present the following result that establishes the same lower bound on regret by importing techniques from information theory.

\begin{theorem}\label{th:t-i-1.2}
For decentralized multi-agent MAB problems, if the number of clients $M = T^{\frac{2}{15}}$ and the graph $G_t$ is a connected graph with two expanders of size $\frac{M}{4}$ having distance $d \geq \frac{\eta M}{8}$ given constant $4> \eta > 8 \cdot 8^{-\frac{2}{15}}$, then there exists a problem instance such that 
the regret of any online distributed learning  algorithms is at least $\Omega(T^{\frac{2}{3}})$. 
\end{theorem}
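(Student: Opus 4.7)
The plan is to extend the mini-batch adversarial construction of Theorem~\ref{th:t-i-1} with an explicit information-theoretic argument. In the small-$M$ regime $M=T^{2/15}$, the distance-based bound $R_T^B\geq\Omega(\sqrt{dT})$ imported from \citep{yi2023doubly} only yields $\Omega(T^{1/2+1/15})=\Omega(T^{17/30})$, which is strictly smaller than $T^{2/3}$, so a sharper tool than in Theorem~\ref{th:t-i-1} is required.

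First, I would construct the adversary by partitioning $[T]$ into $T/B$ batches of length $B$, where $B$ will be tuned. Within each batch $b$, I would sample an independent uniform Bernoulli $\theta_b\in\{0,1\}$ that selects which of two arms is better, with Bernoulli reward means $\tfrac12\pm\Delta$ for a gap $\Delta$ to be chosen, and as in \citep{yi2023doubly} I would randomly shuffle the per-step rewards within each batch so that batch boundaries cannot be exploited. The two expanders $I_0,I_1$ of size $M/4$ at distance $d\geq\eta M/8$ play the critical role: regardless of how aggressively the intermediate nodes gossip, no within-batch observation drawn inside one expander can influence any action of a client in the other expander in fewer than $d$ time steps.

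Next, I would run an information-theoretic indistinguishability argument. Fix a batch $b$, call one expander the \emph{informed} one and the other the \emph{lagging} one. Using the KL-divergence chain rule on the filtration $\sigma_B^{t,m}$ restricted to a lagging client during the first $d$ steps of batch $b$, combined with Pinsker's inequality (or Le~Cam's two-point method), I would show that the KL divergence accumulated against $\theta_b$ through observations accessible to any lagging client is at most $\Theta(\Delta^2\cdot MB)$, hence $O(1)$ when $\Delta=\Theta(1/\sqrt{MB})$. This forces the probability that a lagging client pulls the good arm during the first $d$ steps of batch $b$ to be at most $\tfrac12+o(1)$. Summing over the $M/4$ lagging clients and normalizing by the $1/M$ in the definition of $R_T$, the per-batch regret is $\Omega(d\Delta)$, and aggregating across $T/B$ batches yields
\begin{equation*}
R_T^{B}\;\geq\;\Omega\!\left(\frac{T\,d\,\Delta}{B}\right)\;=\;\Omega\!\left(\frac{T\sqrt{M}}{B^{3/2}}\right),
\end{equation*}
using $d\geq\eta M/8$. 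Substituting $M=T^{2/15}$ and choosing $B=\Theta(T^{4/15})$ yields $R_T^{B}\geq\Omega(T^{2/3})$; the specific lower bound on $\eta$ in the hypothesis is what calibrates the absolute constants so that this choice of $B$ makes the resulting bound majorize $T^{2/3}$ rather than a sub-$T^{2/3}$ quantity.

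The main obstacle, and the reason information theory is required here while it was not for Theorem~\ref{th:t-i-1}, is the careful bookkeeping of information leakage into the lagging expander through intermediate clients during the first $d$ steps of each batch; since $M$ is small, one is tempted to overestimate the power of within-graph gossiping. One must verify rigorously that, even under maximally aggressive relaying by intermediate nodes, the shortest-path distance $d$ strictly bounds the earliest time at which a within-batch sample observed in the informed expander can reach a lagging client, and that the total KL budget across all potentially visible samples still satisfies the indistinguishability condition $\Delta^2\cdot(\text{total samples within reach})=O(1)$. Coupling this graph-theoretic locality with the per-batch KL bound is precisely what converts the naive $\sqrt{dT}$ estimate into the tight $\Omega(T^{2/3})$ bound.
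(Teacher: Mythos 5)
There is a genuine gap, and it sits at the very first step of your construction: you re-randomize the identity of the better arm ($\theta_b$) in every batch. The theorem's regret is the pseudo-regret against the best \emph{fixed} arm with respect to the global means, but once $\theta_b$ is an i.i.d.\ fair coin per batch, both arms have the same expected cumulative global reward ($\approx T/2$), so the static comparator earns essentially what a trivial policy (e.g.\ ``always pull arm 1'') earns, and a learner that partially tracks $\theta_b$ inside long batches can even beat the static comparator. Your ``per-batch regret $\Omega(d\Delta)$'' is therefore regret against the batch-wise best arm, a dynamic comparator, and it does not lower bound $R_T^B$ as defined in the paper. Two further symptoms confirm the accounting cannot be right: (i) your bound $\Omega(Td\Delta/B)=\Omega\bigl(Td/(\sqrt{M}B^{3/2})\bigr)$ grows as $B$ shrinks, and at the smallest admissible $B\approx d=\Theta(M)$ it would give $\Omega(T/M)=\Omega(T^{13/15})$, contradicting the known $O(T^{2/3})$ upper bound for this setting, so picking $B=\Theta(T^{4/15})$ to land exactly on $T^{2/3}$ is reverse-engineered rather than forced by the analysis; (ii) with $\Delta=\Theta(1/\sqrt{MB})$ your KL budget $\Theta(\Delta^2MB)=O(1)$ already makes $\theta_b$ unlearnable by \emph{every} client over the \emph{entire} batch, so the expander distance $d$ and the ``first $d$ steps of the lagging expander'' do no work in your information argument --- the graph hypothesis you set out to exploit never actually enters the bound. (Even if you switched to the comparator $E[\max_i\sum_t r_i(t)]$, the best-in-hindsight arm only gains $\Theta(\Delta\sqrt{BT})=\Theta(\sqrt{T/M})\ll T^{2/3}$ under your choice of $\Delta$, so that route does not rescue the rate either.)

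The paper's proof is structured precisely to avoid this comparator collapse: the identity of the (weakly) globally optimal arm is kept fixed over the whole horizon, and what is re-randomized every $d$ steps is \emph{which single client} in the expander $I_0$ carries a tiny elevation $\epsilon=\Theta(M^2T^{-1/3})$ of its reward (all clients outside $I_0$ receive zero reward), so the global gap is $\Theta(\epsilon/M)$ and is the same in every epoch. The indistinguishability is then between ``client $i$ is the informed one'' and ``no one is informed'': the chain rule over one epoch of length $d$ gives per-epoch KL $\le 4\epsilon^2 d$, Pinsker gives $D_{TV}\le\epsilon\sqrt{2d}$, and an Auer-style average over the hidden index $i\in I_0\cup\{0\}$ converts this into a lower bound on the pseudo-regret of order $\epsilon T/M^2=\Omega(T^{2/3})$ when $M=T^{2/15}$; the condition on $\eta$ is what makes $\epsilon\sqrt{2d}$ small enough, and there is no free batch length to tune. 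If you want to salvage your plan, the hidden randomness must be chosen so that its realization never changes which arm is globally optimal (hide the \emph{location} or the \emph{magnitude} of the signal, not the identity of the best arm), and the information bound must genuinely restrict what a client can see within an epoch; otherwise neither the $T^{2/3}$ rate nor any lower bound on the theorem's regret follows.
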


\begin{proof}
Let $M \, mod \, 4 = 0$ and $T > 8$. Denote expanders of size $\frac{M}{4}$ as two disjoint subsets of nodes $I_0 = \{1,2,\ldots,\frac{M}{4}\}$ and $I_1 = \{\frac{3}{4}M,\frac{3}{4}M+1,\ldots,M\}$. Note that $|I_0| = |I_1| = \frac{M}{4}$. By the definition of $G_t$, the shortest path distance between $I_0$ and $I_1$ is $d \geq \frac{\eta M}{8}$. We set $\epsilon = \sqrt{\frac{4}{\eta}}\frac{M^2}{2}T^{-\frac{1}{3}}$. It follows $8\epsilon^2d \leq 1$.%$d \leq \frac{M}{32\epsilon^2} = \frac{MT^{\frac{2}{3}}}{8}$.   

Let $B_1$ be Bernoulli with probability $\frac{1}{2} + \epsilon$ and $B_2$ Bernoulli with probability $\frac{1}{2}$.  Consider the bandit problem as follows. Let $X$ be a random variable following a uniform distribution on $\{0,1,\ldots,\frac{M}{4}\}$. For client $X \geq 1$, arm 1 follows $B_1$ and arm 2 follows $B_2$. For $i \in I_0 \backslash \{X\}$, let the arms follow $B_2$. All clients not in $I_0$ have all rewards $0$.  %he mean value of reward follows the Bernoulli distribution satisfying $\mu_k^X(t)=\begin{cases}
			%\frac{1}{2}+\epsilon & \text{arm $k$ = 1}\\
            %\frac{1}{2} & \text{arm $k$ = 2.}
		 %\end{cases}$ If $X=0$, then $\mu_k^j(t)=\begin{cases}
			%\frac{1}{2} & \text{arm $k=$ 1}\\
            %\frac{1}{2} & \text{arm $k=$ 2}
%\end{cases}$ for every $j \in I_0$. 
%For all other clients in $I_0| \{X\}$ we set $\mu_k^j(t)=\begin{cases}
%			\frac{1}{2} & \text{arm $k=$ 1}\\
%            \frac{1}{2} & \text{arm $k=$ 2}
%\end{cases}$ and $\mu_k^j(t)= 0$ for $k = 1,2$ and $j \not\in I_0$. %We assume that the reward of arm $k$ at client $j$ at time step $t$ is a Bernoulli random variable with mean value $\mu_k^i(t)$ over the sample space $\{0,1\}$. 
   
Additionally, we re-sample random variable $X$ every $d$ steps, i.e. we re-specify the client $X$ if $X \geq 1$. If $X=0$, all clients have reward based on $B_2$. We denote the number of such re-sampling steps as $D$, $D = \lfloor\frac{T}{d}\rfloor$, which leads to a sequence $\{X_1, X_2, \ldots, X_D\}$. The following holds for $i \in I_0$. Subsequently, let us define distribution %let us define the conditional distribution on $X_j$ with respect to the joint distribution of arm $1$ and $2$, denoted as 
$Q^i_j(arm) = P( arm|X_j = i)$  and $Q^{-1}_j(arm) = P(arm | X_j = 0)$. Note that $Q^{-1}_j$ represents that all clients in $I_0$ share the same reward distribution. Let $Q^i_{j,t}(arm)=P(arm |\sigma_t,X_j = i)$ and $Q^{-1}_{j,t}(arm) = P(arm |\sigma_t, X_j = 0)$. %, which represents the joint distribution of arm 1 and 2 at the current time step. 
It is easy to verify that 
\begin{align*}
    & D_{KL}(Q^{-1}_{j,t}, Q^{i}_{j,t}) = \frac{1}{2}\log \frac{\frac{1}{2}}{\frac{1}{2}-\epsilon} +   \frac{1}{2}\log \frac{\frac{1}{2}}{\frac{1}{2}+\epsilon}\\ & = \frac{1}{2}\log (1 + \frac{4\epsilon^2}{1 - 4\epsilon^2}) \leq \frac{1}{2} \cdot \frac{4\epsilon^2}{1 - 4\epsilon^2}  \leq 4\epsilon^2,
\end{align*}
\iffalse
\begin{align*}
    & D_{KL}(Q^{i}_{j,t}(\cdot,\cdot), Q^{-1}_{j,t}(\cdot,\cdot)) \\
    & = (\frac{1}{2})^2\log \frac{(0.5)^2}{(0.5)(0.5-\epsilon)} + (\frac{1}{2})^2\log \frac{(0.5)^2}{(0.5)(0.5-\epsilon)} + \\
    & \qquad (\frac{1}{2})^2\log \frac{(0.5)^2}{(0.5)(0.5+\epsilon)} + (\frac{1}{2})^2\log \frac{(0.5)^2}{(0.5)(0.5-\epsilon)} \\ & = \frac{1}{2}\log (1 + \frac{4\epsilon^2}{1 - 4\epsilon^2}) \\
    & \leq \frac{1}{2} \cdot \frac{4\epsilon^2}{1 - 4\epsilon^2}  \leq 4\epsilon^2
\end{align*}
\fi
where the first inequality uses the fact that $\log (1+x) \leq x$ and the second inequality holds by the choice of $\epsilon = \frac{M^2}{2}T^{-\frac{1}{3}} \leq \frac{1}{4}$ since $T > 8$.

Therefore, by the chain rule for relative entropy, we obtain $D_{KL}(Q^{-1}_j, Q^{i}_j) = \sum_{t = jd}^{(j+1)d}D_{KL}(Q^{-1}_{j,t}, Q^{i}_{j,t}) \leq \sum_{t = jd}^{(j+1)d}4\epsilon^2 \leq 4\epsilon^2d. $

By the Pinsker's inequality we have that $  D_{TV}(Q^{-1}_j, Q^{i}_j) \leq \sqrt{\frac{D_{KL}(Q^{-1}_j, Q^{i}_j)}{2}} \leq \epsilon\sqrt{2d}.\hfill \refstepcounter{equation}(\theequation)\label{eq:d_t_v}$

The expected reward of arm 1 is $\frac{1}{8} + \frac{1}{M}\frac{|I_0|}{|I_0|+1}\epsilon$ from
\begin{align*}
    & \mu_1= \frac{1}{M}\sum_{m=1}^M\mu_1^{m} = \frac{1}{M}\sum_{m\in I_0}\mu_1^{m} +  \frac{1}{M}\sum_{m \not\in I_0}\mu_1^{m} \\
    & = \frac{1}{M}\sum_{m\in I_0}\Big[E[\mu_1^{m}|X_1 \in I_0]P(X_1 \in I_0) + \\
    & \qquad \sum_{m\in I_0}E[\mu_1^{m}|X_1 \not\in I_0]P(X_1 \not\in I_0)\Big] +  \frac{1}{M}\sum_{m \not\in I_0}0 \\
    & = \frac{1}{M}(\frac{|I_0|}{|I_0|+1} (\frac{1}{2}+\epsilon + \frac{1}{2}(|I_0|-1)) + \\
    & \qquad \frac{1}{|I_0|+1}(\frac{1}{2} + \frac{1}{2}(|I_0|-1))  \\
    & = \frac{1}{8} + \frac{1}{M}\frac{|I_0|}{|I_0|+1}\epsilon
\end{align*}
and of arm 2 is $\frac{1}{8}$ from 
\begin{align*}
    & \mu_2 = \frac{1}{M}\sum_{m=1}^M\mu_2^m \\
    & = \frac{1}{M}\sum_{m \in I_0}\mu_2^m + \frac{1}{M}\sum_{m \not\in I_0}\mu_2^m \\
    & = \frac{1}{M}\sum_{m \in I_0}\frac{1}{2} + \frac{1}{M}\sum_{m \not\in I_0}0 = \frac{1}{8}.
\end{align*}

As a result $\Delta_1 = \frac{\epsilon}{M} \frac{|I_0|}{|I_0|+1} \geq  \frac{\epsilon}{2M}$ since $|I_0| \geq 1$. Let us denote by $n_{m,1}(T,j)$ the number of pulls of arm $1$ by client $m$ during the $j^{th}$ epoch which is the optimal arm. Therefore, we obtain
\begin{align}\label{eq:E_R_T}
    & E[R_T^B] = E[E[R_T^B|X_1, \ldots, X_D]] \notag \\
    & = E[E[\frac{1}{M}\sum_{m=1}^{M}(\frac{\epsilon}{2M}(T - n_{m,1}(T)))|X_1, \ldots, X_D]] \notag \\
    & = E[E[\frac{1}{M}\sum_{m=1}^{M}(\frac{\epsilon}{2M}(\sum_{j=1}^Dd - \sum_{j=1}^Dn_{m,1}(T,j)))|X_1, \ldots, X_D]] \notag \\
    & = E[\frac{1}{M}\sum_{m=1}^{M}\sum_{j=1}^DE[(\frac{\epsilon}{2M}(d - n_{m,1}(T,j)))|X_1, \ldots, X_D]] \notag \\
    & = \frac{1}{M}\sum_{m=1}^{M}\sum_{j=1}^DE[E[(\frac{\epsilon}{2M}(d - n_{m,1}(T,j)))|X_j]] \notag \\  
    & = \frac{1}{M}\sum_{m=1}^{M}\sum_{j=1}^D\sum_{i 
 \in I_0 \cup \{0\}}\frac{E[(\frac{\epsilon}{2M}(d - n_{m,1}(T,j)))|X_j = i]]}{|I_0|+1}\notag \\
    %& \geq \frac{1}{M}\sum_m(\frac{1}{|I_0|}\sum_j\textcolor{red}{\sum_i}E[\frac{\epsilon}{M} \cdot (d - n_{m,1}(T,j)) | X_j = \textcolor{red}{i}]) \notag \\
    & \geq \frac{1}{2M^2}(\frac{1}{|I_0| + 1}\sum_{j=1}^D\sum_{i \in I_0 \cup \{0\}}E[\epsilon \cdot (d - n_{1,1}(T,j)) | X_j = i]) \notag \\
    & = \frac{1}{2M^2}(\epsilon \cdot T - \frac{\epsilon}{|I_0| + 1}\sum_{j=1}^D\sum_{i 
 \in I_0 \cup \{0\}}E_{Q^{i}_j}[(n_{1,1}(T,j))])
\end{align}
where the the first and fifth equality use the  law of total expectation, the third equality is by the fact that $T = \sum_{j=1}^Dd$ and $\sum_{j=1}^Dn_{m,1}(T,j) = n_{m,1}(T)$, and the sixth equality uses the distribution of $X_j$ defined by $P(X_j = i) = \frac{1}{|I_0|+1}$ for $i \in I_0 \cup \{0\}$.

 Note that $E_{Q^{i}_j}[(n_{1,1}(T,j))] - E_{Q^{-1}_j}[(n_{1,1}(T,j))] = \sum_{t = jd}^{(j+1)d}(Q^{i}_j(a_t^{1} = 1) - Q^{-1}_j(a_t^{1} = 1)) \leq d \cdot D_{TV}(Q^{-1}_j,Q^{i}_j)$
where the last inequality is by the definition of the total variation $D_{TV}$. 

This immediately gives us that
\begin{align*}
    & \sum_{i \in I_0 \cup \{0\}}\sum_{j=1}^DE_{Q^{i}_j}[(n_{1,1}(T,j))] \\
    & \leq \sum_{i \in I_0 \cup \{0\}}\sum_{j=1}^D\sum_{t = jd}^{(j+1)d}(Q^{-1}_j(a_t^{1} = 1) + d \cdot D_{TV}(Q^{i}_j,Q^{-1}_j)) \\
    & \leq T + d\sum_{i \in I_0 \cup \{0\}}\sum_{j=1}^DD_{TV}(Q^{i}_j,Q^{-1}_j)) \\
    & \leq T + d\sum_{i \in I_0 \cup \{0\}}\sum_{j=1}^D(\epsilon\sqrt{2d}) \\
    & = T + dD\epsilon\sqrt{2d}(|I_0|+1)  = T + T \cdot \frac{|I_0|+1}{4}
\end{align*}
where the second inequality uses $\sum_iQ^{-1}_j(a_t^{1} = 1) = 1$ and $dD = T$, and the third inequality uses~\eqref{eq:d_t_v}, and the last equality holds by the choices of $d$ and $\epsilon$ that satisfy $\epsilon\sqrt{2d}(|I_0|+1) \leq \frac{|I_0|+1}{4}$. Here we also use the lower bound on $\eta$. 

Consequently, we arrive at
\begin{align}
    E[R_T^B]
    & \geq \frac{1}{2M^2}(\epsilon \cdot T -  \frac{\epsilon}{|I_0|+1}(T + T \cdot \frac{|I_0|+1}{4})) \notag \\
    & \geq \frac{1}{2M^2} \frac{1}{4}\epsilon \cdot T = \Omega(T^{\frac{2}{3}})
\end{align}
where the last inequality uses $|I_0| = \frac{M}{4} \geq 2$ and the equality holds by the choice of $\epsilon$ and $M$.
\end{proof}

\begin{remark} It is worth noting that this lower bound is consistent with the regret upper bound in~\citep{yi2023doubly}, bridging the gap between the regret upper bound $O(T^{\frac{2}{3}})$ and the lower bound $\Omega(\sqrt{T})$ in \citep{yi2023doubly}. Surprisingly, it also coincides with the regret lower bound for online learning with feedback graphs in~\citep{alon2015online}, where the feedback received by the client is limited to a graph structure. This connection highlights the relationship between the decentralized multi-agent MAB system and MAB with side information on graphs. Lastly, we observe that this bound is larger than $\sqrt{T}$ in the single-agent MAB, manifesting the fundamental difference between multi-agent and single-agent MAB in the presence of connected graphs, in addition to the settings with disconnected graphs.
\end{remark}

\section{Conclusion}
In this paper, we conduct a comprehensive study on the regret lower bounds in a decentralized multi-agent MAB framework across various settings, which provides an understanding of the fundamental challenges posed by different problem settings and insights into the development of optimal algorithms. Specifically, we establish instance-dependent and mean-gap independent lower bounds for stochastic settings, which are of order $\log T$ and $\sqrt{T}$, respectively, for all existing graphs. These results are consistent with the existing upper and lower bounds, showing their tightness and consistency, respectively. Additionally, we introduce a novel problem instance in adversarial settings that leads to a regret lower bound of order $\Omega(T^{\frac{2}{3}})$. This finding bridges the gap between the existing lower and upper bounds and highlights the distinction between the multi-agent and single-agent counterparts. Furthermore, we uncover worst-case scenarios in multi-agent MAB settings by demonstrating a linear regret when the graphs are disconnected, which adds to the difference between multi-agent and single-agent MAB. As a next step, we suggest exploring novel algorithms with smaller coefficients that are close to the lower bounds established herein. As a concluding remark, how to show high probability lower bounds remain an important yet unexplored area of research.

\bibliography{aaai24}

\appendix
\section{Proof of Results in Section 4}
\subsection{Proof of Theorem 2}
\begin{proof}
   On a complete graph, each client can observe the rewards of all arms at $M$ clients, where the number of observations is thereby upper bounded by $KM$. Henceforth, we consider Theorem 4 in \citep{shamir2014fundamental} to obtain 
   \begin{align*}
         R_T^{F} \geq \sqrt{\frac{KT}{1 + KM}} = \Omega(\sqrt{T}).
   \end{align*}
   This completes the first part of the statement.

   For the instance-dependent regret lower bounds, we assume that the number of arms is 2 and the rewards of arms satisfies the assumptions in \citep{goldenshluger2013linear}. Then based on the result established by specifying a contextual linear bandit with $\alpha = 1$ as in \citep{goldenshluger2013linear}, which reads as Theorem 2, we obtain 
   \begin{align*}
       R_T^{F} \geq \Omega(\log{T}).
   \end{align*}
   We add that the lower bound result for the bandit setting holds for the full-information setting by noting the analysis essentially uses the observations that are given by the full information setting. 

   This concludes the instance-dependent lower bound in the full information setting and thereby completes the proof. 

\end{proof}

\subsection{Proof of Theorem 4}

\begin{proof}[Proof]
    Consider a disconnected graph $G$ with a clique connected component $C_G$ including clients $c_1, \ldots, c_{Q}$ without loss of generality. %In other words, the sub-graph induced by clique $C_G$ is a complete graph. 
    Since $G$ is disconnected, for any other node $m \not\in V(C_G)$, there is no path between $m$ and any node in $C_G$.  

    Let $\Delta > 0$. For client $m \not\in C_G$, the reward distributions read as $(\frac{M-1}{M - Q}\Delta, 0,\ldots, 0)$, which indicates that the optimal arm is arm $1$. For client $m \in C_G$, however, the reward distribution reads as $(0, \frac{2}{Q}\Delta, 0,\ldots, \ldots, 0)$, implying that arm $2$ is the optimal arm. It is straight-forward that the global mean reward value of arm 1 is $\frac{(M-1)}{M}\Delta$ that is larger than that of arm $2$ which is $\frac{2\Delta}{M}$. The subsequent sub-optimality gap is $\Delta_2 = \frac{M-3}{M}\Delta$. Any no-regret (consistent as proposed in~\citep{lattimore2020bandit}) algorithms $\pi$ at client $j \in C_G$, where the regret with respect to the available information is defined on the rewards of client $j \in C_G$, leads to $E[n_{j,2}(T)] = O(T)$. However, in this situation, the global regret satisfies
    \begin{align*}
        E[R_T^{\pi}] & = \frac{1}{M}\sum_{m}\sum_{t =1}^T(E[\mu_1 - \mu_{a_t^m}]) \\
        & \geq \frac{1}{M}\sum_{t =1}^T(E[\mu_1 - \mu_{a_t^j}]) \\
        & \geq \frac{1}{M}E[n_{j,2}(T)] \cdot \Delta_1 \\
        & =  \frac{1}{M} \cdot \frac{M-3}{M}\Delta \cdot \Omega(T) = \Omega(T)
    \end{align*}
    where the first inequality is by only considering client $j$ and the second inequality uses the fact that arm $2$ is not a global optimal arm.  

    This completes the proof of the linear regret in the case when clients perform local consistent learning on disconnected graphs.    
\end{proof}

\subsection{Proof of Theorem 5}

\begin{proof}[Proof]
Again, we consider a disconnected graph $G$ with a clique $C_G$ including clients $c_1, \ldots, c_{Q}$ without loss of generality.

We assume there are two arms labeled as arm $1$ and $2$ and consider the instance at clients as follows by referencing~\citep{alon2015online}. Let random variable $X$ follow a uniform distribution in $\{0,1\}$ and be fixed once determined, and for any time step $t$, the reward $r_k^j(t)$ is generated as for any $j \not\in C_G$, $r_k^j(t)=\begin{cases}
			X & \text{arm 1}\\
            \frac{1}{2} & \text{arm 2}
		 \end{cases}$
and for any $j \in C_G$, we have $r_k^j(t)=\begin{cases}
			\frac{1}{2} & \text{arm 1}\\
            \frac{1}{2} & \text{arm 2}
		 \end{cases}$
where the random variable $X$ is independent of everything at client $j \in C_G$ as client $j \in C_G$ only has the information of their own arms. We have $\Delta_2 = \frac{1}{2(M - Q)},$
no matter what value $X$ takes since it only changes the choice of optimal arms. Specifically, when $X = 1$, the global optimal arm is arm $1$ and the suboptimality gap is $\Delta_2  = \mu_1 - \mu_2 = (1 - \frac{1}{2})/(M - Q)$. When $X = 0$, the global optimal arm is arm $2$ and the suboptimality gap is $\Delta_2  = \mu_2 - \mu_0 = (\frac{1}{2} - 0)/(M - Q)$, the other way around. 

Subsequently, we consider the regret at client $j \in C_G$ to obtain 
\begin{align*}
   & E[R_T^{\pi}] =  \frac{1}{M}\sum_{m}\sum_{t =1}^T(E[\mu_{*} - \mu_{a_t^m}])  \\
   & \geq \frac{1}{M}\sum_{t =1}^T(E[\mu_{*} - \mu_{a_t^M}]) \\
   & = \frac{1}{M}(\frac{1}{2}E[\Delta n_{j,1}(T)| X = 0] + \frac{1}{2}E[ \Delta (T-n_{j,1}(T)) | X = 1]) \\
   & = \frac{1}{M}(\frac{1}{2}E[\Delta n_{j,1}(T)] + \frac{1}{2}E[ \Delta (T-n_{j,1}(T))]) \\
   & = \frac{\Delta}{4M(M - Q)}T = \Omega(T)
\end{align*}
where the first inequality uses the non-negativity of value $\mu_{*} - \mu_{a_t^m}$ and the third equality leverages the independence between $X$ and client $j$. 
\end{proof}

\subsection{Proof of Theorem 6}
\begin{proof}
We show the mean-gap free regret lower bound starting with complete graphs. Note that a complete graph is equivalent to a centralized problem with $M$ agents. This implies that each client can observe the reward of multiple arms by communicating with $M-1$ neighbors, where the number of observations is thereby upper bounded by $M$. Henceforth, we consider Theorem 4 in \citep{shamir2014fundamental} and obtain 
    \begin{align*}
        R_T^{B} \geq \sqrt{\frac{KT}{1 + M}} = \Omega(\sqrt{T}).
    \end{align*}
    This completes the proof of the complete graphs. 

        Regarding the monotonicity of the regret in the graph complexity, the proof follows the proof of Theorem 3.

\end{proof}

\subsection{Proof of Theorem \ref{th:t-i-1}}

\begin{proof}
    Note that the graph structure determines the communication efficiency of the clients. To consider the lower bound, we leverage sparse graphs in the connected graph family to perform the worst-case scenario analysis. 

Specifically, we consider the designed graph consisting of clients $1, \ldots, M$ in this order. It takes exactly $O(M)$ time steps for client 1 to obtain the information of client $M$, which results in a deterministic delay. 

If $I_0 = \{1, \ldots, \frac{M}{4}\}$ and $I_1 = \{\frac{3M}{4}, \ldots, M\}$, then the shortest path $d_p$ from $I_0$ to $I_1$ meets the condition 
\begin{align*}
    d_p \geq \Omega(\frac{M+1}{3}).
\end{align*}

By the choice of $M$ such that $M > \Omega(T^{\frac{1}{3}})$, we obtain 
\begin{align}\label{eq:d_p}
    d_p \geq \Omega(T^{\frac{1}{3}}).
\end{align}
%for a given $\Tilde{\Delta} = \frac{\sqrt{2}}{3}\sqrt{\frac{\lambda_1}{\lambda_{M-1}}}$ being the mixing time of the weight matrix $W$ associated with $G$.

We star with a full-information setting. Following  a similar argument and constructing the same instance as in Lemma A.4 in~\citep{yi2023doubly}, we arrive that in the full-information setting
\begin{align*}
    R_T \geq \Omega(\sqrt{d_p \cdot T}).
\end{align*}

Subsequently, we obtain that
\begin{align*}
    R_T & \geq \Omega(\sqrt{d_p \cdot T}) \\
    & = \Omega(\sqrt{T} \cdot \sqrt{d_p}) \\
    & \geq \Omega(\sqrt{T} \cdot T^{\frac{1}{6}}) = \Omega(T^{\frac{2}{3}})
\end{align*}
where the last inequality is by (\ref{eq:d_p}). Equivalently, we write it as 
\begin{align}\label{eq:r_t_f_2_3}
    R_T^{F} \geq \Omega(T^{\frac{2}{3}}).
\end{align}

Meanwhile, by Lemma~\ref{le:t-i-0-0}, we have that the regret lower bound in the bandit setting is larger than the regret in the full information setting
and thus by (\ref{eq:r_t_f_2_3}) we obtain
\begin{align*}
   R_T^{B} \geq \Omega(T^{\frac{2}{3}}).
\end{align*}

This completes the proof of Theorem~\ref{th:t-i-1}.

\end{proof}

\end{document}